\theoremstyle{plain}
\newtheorem{theorem}{Theorem}[section]
\newtheorem{lemma}{Lemma}[section]
\theoremstyle{remark}
\newtheoremstyle{bolddefinition}%
  {3pt}%
  {3pt}%
  {\itshape}%
  {}%
  {\bfseries\boldmath}%
  {.}%
  {.5em}%
  {\thmname{#1}\thmnumber{ #2}\textbf{\thmnote{ [#3]}}}
\theoremstyle{bolddefinition}
\newtheorem{definition}{Definition}[section]
\newtheorem{assumption}[definition]{Assumption}
\newtheorem{proposition}[definition]{Proposition}
\newcommand{\gapi}[1]{\Delta_{\sigma_{#1}, \sigma_{#1+1}}}
\newcommand{\arm}[1]{\mu_{\sigma_{#1}}}
\newcommand{\Sone}{\mathcal{S}^1}
\newcommand{\Sp}{\mathcal{S}^p}
\newcommand{\Egood}{\mathcal{E}_{\text{good}}}
\newcommand{\Carm}{\hat{\nu}^*}
\begin{document}

\twocolumn[

\aistatstitle{Meet Me at the Arm: The Cooperative Multi-Armed Bandits Problem with Shareable Arms}

\aistatsauthor{  Xinyi Hu \And Aldo Pacchiano }

\aistatsaddress{ Boston University \And  Boston University and  Broad Institute of MIT and Harvard } ]

\begin{abstract}
  We study the decentralized multi-player multi-armed bandits (MMAB) problem under a no-sensing setting, where each player receives only their own reward and obtains no information about collisions. Each arm has an unknown capacity, and if the number of players pulling an arm exceeds its capacity, all players involved receive zero reward. This setting generalizes the classical unit-capacity model and introduces new challenges in coordination and capacity discovery under severe feedback limitations. We propose \textsc{A-CAPELLA} (Algorithm for Capacity-Aware Parallel Elimination for Learning and Allocation), a decentralized learning algorithm that achieves logarithmic regret in this generalized regime via protocol-driven coordination. %
\end{abstract}

\section{INTRODUCTION}

\label{sec:introduction}
The multi-armed bandit (MAB) is a foundational framework in sequential decision-making that captures the trade-off between \textit{exploration} and \textit{exploitation}. In the classical setting, a learner is presented with \( K \) arms, each associated with an unknown reward distribution. The goal is to identify and exploit the best arm over a time horizon. At each round \( t = 1, \dots, T \), the learner selects an arm \( a_t \in [K] \) and receives a stochastic reward drawn from the corresponding distribution.

In this paper, we study the \emph{collaborative multiplayer multi-armed bandit (MMAB)} problem, where multiple players interact with a shared set of arms. Unlike prior MMAB settings which assume unit capacity--i.e., each arm can support at most one player at a time, and any collision results in zero reward for all involved players~\citep{boursier2019b, wang2020, lugosi2022}--we generalize the problem by allowing each arm to have an unknown capacity possibly greater than  one.

We propose the Multi-Player Multi-Armed Bandits with Shareable Arms, local observations, and hard overload penalties (\textbf{MMAB-SAX})  framework, where \( M \) decentralized players interact with \( K \) arms. The instance is defined by a mean reward vector \( \boldsymbol{\mu} = (\mu_1, \dots, \mu_K) \in [0,1]^K \), where each arm \( \nu \in [K] \) has an unknown mean reward \( \mu_\nu \) and an unknown capacity \( C_\nu \in \mathbb{N} \). The capacity \( C_\nu \) represents the maximum number of players that can simultaneously pull arm \( \nu \) without incurring a collision penalty. At each time step \( t = 1, \dots, T \), each player \( p \in [M] \) independently selects an arm \( a_t^p \in [K] \). If the number of players selecting arm \( \nu \) exceeds its capacity \( C_\nu \), then all players on that arm receive zero reward. Otherwise, each player pulling that arm receives an independent stochastic reward with expectation \( \mu_\nu \).

Each player observes only their own reward and receives no information about the number of players on the same arm or whether a zero reward was caused by a stochastic realization or a collision. This strict no-sensing feedback and decentralized observation regime poses significant challenges for coordination and capacity discovery.

Our model captures practical real-world applications in decentralized IoT network systems, such as unlicensed spectrum access in TV White Space and RF backscatter communication, where arms (communication channel) naturally have capacity greater than one, and devices operate independently under no sensing constraints, each observing only their own reward without information about collisions or other users \citep{foukalas2014, matsumura2015,elias2015a,hessar2018, gupta2023, gong2024}.

The goal is to design a learning algorithm that maximizes the cumulative reward over time while observation remains decentralized, and explicit runtime communication is infeasible. However, we assume that agents agree beforehand on a shared protocol governing action selection and signal interpretation. Under this assumption, our algorithm leverages structured collision patterns as implicit binary signals to enable coordination and synchronization.

We evaluate our algorithm using the following notion of pseudo-regret:
\begin{align}
\mathcal{R}_T
&= T \cdot 
   \max_{\substack{
      \mathbf{a} \in [\mathbb{N} \cup \{0\}]^K \\
      \sum_{i=1}^K a(i) = M,\; a(i) \le C_i
   }}
   \langle \mathbf{a}, \boldsymbol{\mu} \rangle \notag \\
&\quad - \sum_{t=1}^T \sum_{\nu=1}^K
\mu_{\nu} \cdot
\,\psi_t(\nu)
\,\mathds{1}\!\bigl\{
    \psi_t(\nu) \le C_{\nu}
\bigr\},
\end{align}
where $\psi_{t} (\nu)= \sum_{p=1}^M\mathds{1}\{a_t^p = \nu\}$, is the number of players pulling arm $\nu$ at time $t$.

For simplicity of presentation, we impose the following assumption on the arm reward distribution: 
\begin{assumption}\label{assumption:distinct-means}
  The reward distributions of all arms are supported on \([0, 1]\), and for any \(i \neq j\), we have \(\mu_i \neq \mu_j\).
\end{assumption}

We use the notation \( \sigma \in \mathbb{S}_K \) to denote the unknown permutation such that \( \mu_{\sigma(1)} > \mu_{\sigma(2)} > \cdots > \mu_{\sigma(K)} \). For any \( i,j \in [K] \), we denote the gap between the \( i \)-th and \( j \)-th best arms as:
\(\Delta_{\sigma_i, \sigma_j} = \mu_{\sigma_i} - \mu_{\sigma_j}\), where \(i < j\). Assumption~\ref{assumption:distinct-means} can be relaxed, as discussed in Appendix~\ref{sec:relax1.1}.

\section{RELATED WORK}

Early interest in multiplayer bandits grew out of \textit{cognitive radio networks}, where radios must learn to share spectrum opportunistically \citep{mitola1999}. Some early works applied multiplayer multi-armed bandit formulations to model how cognitive radios should select channels, typically assuming unit capacity on each arm\citep{jouini2009,jouini2010,liu2008,liu2010,anandkumar2011}. Most of the literature assumes that all players know the time horizon $T$; however, this assumption can sometimes be relaxed \citep{degenne2016}. In this work we adopt the standard convention of a known horizon.

Decentralized multiplayer bandit models typically fall into two categories: \textit{sensing} and \textit{no-sensing}. In sensing models, players can detect collisions. Notably, the Musical Chairs routine introduced by \citep{rosenski2016} provides fast orthogonalization and has been used in several algorithms. Other approaches such as SIC-MMAB and DPE1 exploit collision feedback to transmit bits \citep{boursier2019b,wang2020,lugosi2022}. DPE1 achieved regret that matches the centralized optimal performance \citep{wang2020}. 

In contrast, no-sensing models hide collision information: players only observe their own reward, which may be zero due to either collisions or stochasticity. Despite this challenge, several recent works have achieved logarithmic regret in this setting without introducing additional parameters, by leveraging communication through a high-reward arm \citep{huang2022b,pacchiano2023}. Our work adopts a similar idea. Furthermore, it has been shown that \textit{instance-dependent} regret scaling as \( \tilde{O}(1/\Delta) \), where \( \Delta \) is the gap between the optimal arms and the remaining arms, is impossible without some form of communication \citep{liu2022a}. There also exist algorithms such as the Selfish algorithm \citep{besson18}, where each player simply runs UCB independently without coordination; however, such approaches lack theoretical guarantees on the upper bound. To the best of our knowledge, lower bounds under the decentralized no-sensing setting remain open. 

Recent work has extended multiplayer bandits to the setting of \textit{shareable-capacity arms} (MMAB-SA), where each arm can support multiple players up to an unknown capacity. \citet{wang2022,wang2022a} introduced models in both centralized and decentralized settings where each arm has an unknown capacity. In their setting, if the number of players exceeds an arm's capacity, only the overloaded players receive zero reward.  In the centralized setting, follow-up work by \citet{li2024} tightened the regret lower bound established by \citet{wang2022} and proposed a centralized algorithm that matches this minimax lower bound. 
 
In the decentralized setting, \citet{wang2022a} introduce two feedback mechanisms: Sharing Demand Awareness (SDA), where a player receives a binary indicator of whether the arm is shared, and Sharing Demand Information (SDI), where the exact number of co-players is observed. Under different feedback schema, their algorithms, DPE-SDI and SIC-SDA, strategically induce overloads to estimate arm capacities. However, the instance-dependent regret of these algorithms grows as $\mathcal{O}(\log(T) \mu_K^{-2})$ depending on worst arm. In contrast, our algorithm analysis delivers a sharper instance-dependent upper bound that depends only on $\min(\Delta_{\sigma_V,\sigma_{V+1}}, \Delta_{\sigma_{V-1},\sigma_V})^{-1}$, where \(V\) is the minimum number of top arms needed to accommodate all players. When there is a large separation between the top \(V\) arms and the remaining arms, our algorithm quickly identifies the optimal arm set.

Table~\ref{tab:comparison} contrasts our MMAB-SAX model with the decentralized shareable-arm setting of Wang et al.\ (2022). While both assume unknown arm capacities, Wang et al.\ provide \emph{arm-sharing feedback} facilitating implicit communication. In MMAB-SAX, by contrast,  no collision or sharing feedback is available. This stricter regime rules out existing methods such as SIC-MMAB2,  an adapted version of SIC-MMAB under no-sensing, and \citet{wang2022a}'s, which rely on unit-capacity assumptions or sharing feedback as a communication channel. Since instance-dependent \( \tilde{O}(1/\Delta) \) bounds are impossible without communication, we introduce \textsc{A-CAPELLA}, which achieves logarithmic instance-dependent \( \tilde{O}(1/\Delta) \) regret under this fully no-sensing model when capacities are unknown.

\begin{table}[h]
\caption{Comparison of Model Assumptions}
\label{tab:comparison}
\begin{center}
\begin{tabular}{lp{2.7cm}p{2.7cm}}
\textbf{Aspect} & \textbf{Wang et al.} & \textbf{MMAB-SAX} \\
\toprule
Collision & Overloaded players get $0$ 
          & Players on overloaded arm get $0$ \\
          \hline
Reward    & Total per arm reward observed  
          & Own reward observed \\
          \hline
Capacity  & Unknown 
          & Unknown \\
          \hline
Sensing   & SDA/SDI 
          & No sensing \\
\end{tabular}
\end{center}
\end{table}

\section{ASSUMPTIONS \& NOTATIONS}
\label{sec:assumptions_notations}
We generalize prior work that assumes \( C_{\nu} = 1 \) by allowing each arm \( \nu \in [K] \) to have capacity \( C_{\nu} \ge 1 \). Each arm has a reward distribution \( \mathcal{D}_{\nu} \) supported on \( [0,1] \), with mean \( \mu_{\nu} \). If at most \( C_{\nu} \) players pull arm \( \nu \), each receives an independent sample from \( \mathcal{D}_{\nu} \). If more than \( C_{\nu} \) players pull \( \nu \), a collision occurs and all players on that arm receive a reward of zero.

The capacity-aware reward distribution is:
\[
\mathcal{D}_{\nu,\psi} = 
\begin{cases}
\mathcal{D}_{\nu} & \text{if } \psi \le C_{\nu}, \\
0 & \text{if } \psi > C_{\nu},
\end{cases}
\quad \text{with} \quad
\mu_{\nu,\psi} = \mathbb{E}[\mathcal{D}_{\nu,\psi}],
\]

where $\psi$ denotes the number of players simultaneously pulling arm $\nu$.

\begin{assumption}[Shared Knowledge]
    \label{shared knowledge}
    We assume that all players and arms are indexed from 1 to \(M\) and \(K\) respectively, and that all players know their own index number and the index of each arm.  The functions \(B(\cdot),\omega(\cdot)\), and the constant \(h\) are also known to all players at the beginning.
\end{assumption}

\begin{definition}[Simple Round Robin Scheduling]
In this protocol, each player \( p \in \{1, \dots, M\} \) begins by pulling arm \( p \) in the first round. In round \( i \geq 1 \), player \( p \) pulls arm 
\(
(p + i-1) \bmod K.
\) Arm 0 is to be interpreted as arm $K$.
\end{definition}
For simplicity, assume \( K \geq M \), and this assumption  can be relaxed with minor modifications to the algorithm, while the instance-dependent \( \tilde{O}(1/\Delta) \) regret guarantees remain unchanged (see Appendix~\ref{sec: M K assumption}). A complete round of Simple Round Robin occurs when all players have pulled every arm once. This scheduling ensures systematic exploration of all arms without collisions. %
We denote \(\text{RR}([K],w)\) as \(w\) consecutive pulls under Simple Round Robin Scheduling on \([K]\) arms.

\vspace{0.2cm}
\begin{definition}[\(\psi\)-Grouped Round Robin Scheduling]
In this procedure, players are divided into groups of size \( \psi \) for \(\psi\in [M]\). If the total number of players \(M\) is not divisible by \(\psi\), the remaining players form a smaller group. Each group then performs a standard Round Robin exploration over all arms as a single unit. To ensure complete coverage, the procedure is repeated twice. During the second round, players who were previously in a smaller group are prioritized to be placed in a full group of size \(\psi\). This repetition is necessary because without it, some players would have less information about the capacity of arms when \(\psi\) players pull them simultaneously.
\end{definition}
Figure~\ref*{fig:2grouped_rr} illustrates an example of 2-Grouped Round Robin with 3 players and 5 arms. A complete \emph{Grouped Round Robin Session} consists of sequentially executing \( \psi \)-Grouped Round Robin for each \( \psi = 1 \) to \(  m \), where \( m \) denotes the number of \emph{active players} who are still exploring arms in the current epoch. Note that Simple Round Robin is equivalent to 1-Grouped Round Robin.

\begin{figure}[ht]
  \centering
  \includegraphics[width=1\linewidth]{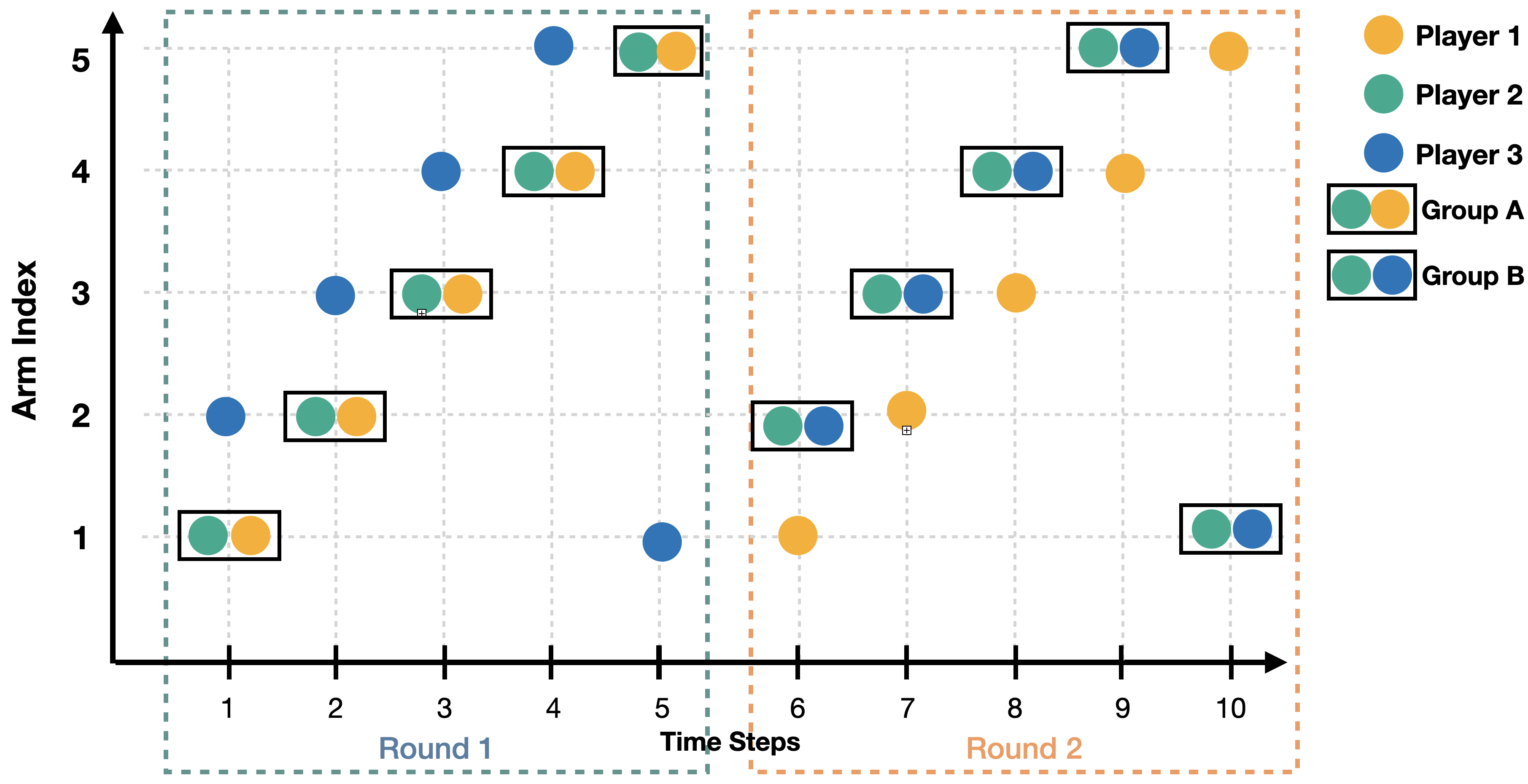}
  \vspace{.1in}
  \caption{
    Illustration of a 2-Grouped Round Robin Scheduling Strategy For 3 Players and 5 Arms. 
  }
  \label{fig:2grouped_rr}
\end{figure}

  Define \( N_{t}^p(\nu,\psi) \) as the total number of times player \( p \) has pulled arm \( \nu \) during \(\psi\)-Grouped Round Robin rounds, where she was successfully assigned to a group of size \( \psi \), up to time \( t \). Let \( N_t(\nu, \psi) \) denote the minimum of \( N_t^p(\nu, \psi) \) across all players by time \( t \), i.e., 
  \(
  N_t(\nu, \psi) = \min_{p} N_t^p(\nu, \psi).
  \)
  Furthermore, we use \(  N_t(\
  \cdot, \psi) \) to denote the minimum \(N_t(\nu, \psi)\) across all active arms by time \( t \).
  Under deterministic Round Robin Scheduling, each player can compute these values locally without communication.

  Based on these observations, each player independently constructs confidence intervals for their estimates of the mean rewards. To formalize this, let \( a_\ell^p \), \( r_\ell^p \), and \( \psi_\ell \) respectively denote the arm pulled by player \( p \), the corresponding reward received, and the group size used in the Round Robin at time \( \ell \).
  Then, the estimated mean reward for arm \( \nu \) by player \( p \) up to time \( t \) during \(\psi\)-Grouped Round Robin rounds is given by:
  \[
  \hat{\mu}_{\nu,\psi}^p(t) = \frac{1}{N_t^p(\nu,\psi)}\sum_{\ell=0}^t r_\ell\cdot  \mathds{1}\bigl\{a_\ell^p = \nu, \psi_\ell = \psi\bigl\},
  \] 
where \(\mathds{1}(\cdot)\) is the indicator function. %
 Choosing a \(\delta \in (0, 1)\) to be the probability parameter, the confidence interval is constructed using the confidence radius defined as 
\(B:\mathbb{N} \to \mathbb{R}^+,  \)
\begin{align} 
B(n) &= \sqrt{\tfrac{2g(n)}{n}}, 
\quad g(n) = \log\!\left(\tfrac{4n^2 M K}{\delta}\right).
\end{align}

After applying Hoeffding’s inequality, for any fixed $p \in [M]$ and $\nu \in [K]$, with probability at least $1-\delta/MK$, we have: 
\begin{equation}
  \left| \hat{\mu}_{\nu,\psi}^p(t) - \mu_{\nu,\psi} \right| \leq B\Bigl(N_t^p(\nu,\psi)\Bigr)
  \label{eq:confidence_bound}
\end{equation}
for all \(t\in \mathbb{N}\) simultaneously. As it is customary our analysis will take place in the ``good event'' when all the confidence intervals are valid:

\begin{proposition}[The Good Event \( \Egood \)]\label{proposition:good_event}
The good event \( \Egood \) occurs when all confidence intervals defined in Equation~\ref{eq:confidence_bound} hold simultaneously for every player \( p \in [M] \), every arm \( \nu \in [K] \), and every time step \( t \in \mathbb{N} \). %
\end{proposition}
A union bound implies that $\Egood$ holds with probability at least \(1-\delta\).

\section{ALGORITHM OVERVIEW}
\label{sec:algorithm}
Our algorithm operates in three sequential phases, with player 1 designated as the \emph{coordinator}. The phases are designed to progressively identify and exploit the optimal set of arms while maintaining decentralized coordination through implicit signaling.

In Phase 1, the coordinator identifies a common communication arm, which serves as the coordination mechanism for subsequent phases. To inform the other players, known as \emph{listeners}, of the selected arm, the coordinator deliberately disrupts the dynamics of a high-reward arm--one that is comparable to the best arm--thereby creating a detectable signal. 
In Phase 2, the coordinator constructs a connectivity graph of arms by connecting arms whose empirical confidence intervals overlap, while concurrently executing a simple round robin exploration and updating the connectivity graph. This graph naturally partitions into two components: a top cluster comprising arms with higher estimated mean rewards, and a bottom cluster consisting of arms with lower estimated rewards. After the clustering is established, the coordinator signals the start of communication and informs the other players of the cluster containing more promising arms.
Phase 3 tests the capacities of arms in the high-reward cluster and marks those with known capacities.

The algorithm recursively executes Phase 2, conditionally skipping Phase 3 if the capacities of all arms in the high-reward cluster identified during Phase 2 already have known capacities. This process continues until the algorithm identifies the optimal \(V\) arms, where \(V\) is defined as the integer satisfying:
\vspace{-0.5em}
\begin{equation}
\sum_{i=1}^{V-1} C_{\sigma_i} < M \leq \sum_{i=1}^V C_{\sigma_i}.
\label{eq:V_arms}
\end{equation}

Our main result is the following theorem.
\begin{theorem}[\textbf{Simplified}]
\label{thm:informal_main}
There exists a decentralized algorithm which, with probability at least \( 1 - \delta \), achieves a pseudo-regret of order
\[
O\Bigl( \frac{\mathrm{poly}(M, K)}{\min(\Delta_{V-1, V},\Delta_{V, V+1})} \cdot \log\left( \frac{1}{\delta} \right) \Bigr)
\]
\end{theorem}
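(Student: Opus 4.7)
The plan is to work inside the good event $\Egood$ (Definition~\ref{definition::good_event}), which by a union bound holds with probability at least $1-\delta$; under $\Egood$, every empirical mean lies within its Hoeffding radius $B(\cdot)$ of the true mean. I will decompose the pseudo-regret into an exploration contribution accumulated across the recursive executions of Phases~1, 2, and~3, plus an exploitation contribution incurred once the algorithm commits. The exploitation contribution is zero under $\Egood$: after the top-$V$ arms and their capacities are identified, every player is assigned a fixed seat in some optimal allocation $\mathbf{a}^\star$ with $\sum_i a^\star(i)=M$ and $a^\star(i)\le C_i$, so the instantaneous regret vanishes in every subsequent round. It therefore suffices to bound the total number of exploration rounds and combine this with the per-round regret ceiling of $M$.

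For the exploration bound, let $\Delta:=\min(\gapi{V-1},\gapi{V})$. Phase~2 runs Simple Round Robin on the active arms, giving each arm one pull per $K$ rounds per player; enforcing $B(n)<\Delta/4$ in~\eqref{eq:confidence_bound} requires $n=\tilde{O}(\log(MK/\delta)/\Delta^2)$, so Phase~2 terminates after $\tilde{O}(K\log(MK/\delta)/\Delta^2)$ rounds, and under $\Egood$ the arm-connectivity graph correctly partitions the active arms into a top cluster contained in $\{\sigma_1,\dots,\sigma_V\}$ and a bottom cluster. Phase~1 (electing a communication arm via a detectable disruption on a high-reward arm) and the intra-phase coordinator-to-listener broadcasts each contribute only $\mathrm{poly}(M,K)\log(MK/\delta)$ additional rounds, since every signalling block is decodable under $\Egood$ after $\tilde{O}(\log(MK/\delta))$ repetitions on an arm whose mean is already tightly estimated. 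Phase~3 runs $\psi$-Grouped Round Robin for $\psi\in[m]$: under $\Egood$, $\mu_{\nu,\psi}=0$ exactly when $\psi>C_\nu$, so each capacity boundary is resolved after $\tilde{O}(\log(MK/\delta))$ samples per $(\nu,\psi)$ pair, and the phase is skipped whenever every top-cluster arm already has a known capacity. Summing across the at-most-$M$ outer-loop recursions (each either identifies at least one new top arm or terminates) and multiplying by the $M$-per-round regret ceiling gives the advertised $\mathrm{poly}(M,K)\cdot\log(1/\delta)/\Delta$ order, with the sharper $\Delta^{-1}$ (rather than the naive $\Delta^{-2}$) scaling emerging from a gap-scaled accounting that charges each suboptimal Round Robin pull its own gap rather than the trivial unit cost.

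The main obstacle is verifying correctness of the implicit signalling protocols under the strict no-sensing feedback: the coordinator must transmit binary messages (\emph{``communication arm chosen''}, \emph{``top cluster is $S$''}, \emph{``capacity-testing complete''}) using only structured overloads, and each listener must reliably distinguish a coordinator-induced zero reward from an ordinary stochastic zero. I would argue this by showing that the coordinator repeats each signalling block $\Theta(\log(MK/\delta))$ times on an arm whose mean is already tightly estimated, so that under $\Egood$ the per-block empirical reward deviates from the unperturbed capacity-aware mean by more than any confidence radius and thus becomes unambiguously decodable. A secondary subtlety is the second sweep of $\psi$-Grouped Round Robin, where I would verify that the prioritisation of previously-ungrouped players equalizes per-player sample counts so that $N_t(\nu,\psi)=\min_p N_t^p(\nu,\psi)$ grows at the rate used in the confidence intervals; without this equalisation the minimum-over-players quantity could lag the maximum, inflating the required horizon for every $\psi$-group.
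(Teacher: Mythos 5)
Your overall strategy is the same as the paper's: work under $\Egood$, decompose regret phase by phase, bound the number of Round Robin rounds needed for the connectivity graph to disconnect by $O(\log(1/\delta)/\Delta^2)$, and recover the $\Delta^{-1}$ scaling by charging each exploration round a gap-proportional cost rather than the unit ceiling. That gap-scaled accounting is indeed the paper's key device (Lemma~\ref{lemma:simple round robin regret} bounds one Simple Round Robin round by $M(K-V)K\max_i\gapi{i}$, which cancels one power of the gap against the $1/\Delta^2$ round count), and your Phase~2 treatment is essentially the paper's.

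There is, however, a concrete slip in your accounting for Phase~1, the signalling blocks, and the capacity tests. You claim these contribute only $\mathrm{poly}(M,K)\log(MK/\delta)$ \emph{rounds}, but that is not what the protocol costs: Phase~1 must run Grouped Round Robin until the inflated-radius condition $\hat{\mu}_{\nu,1}-5B(N_t(\nu,1))>0$ triggers, which takes $\Theta(\log(1/\delta)/\arm{1}^2)$ sessions, and each bit of collision signalling (Definition~\ref{def:duration_function}) and each capacity hypothesis test (Lemma~\ref{lem:zero_testing_samples}) requires $\Theta(\log(1/\delta)/\mu_\nu)$ consecutive pulls, not $\Theta(\log(1/\delta))$. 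If you then apply your own recipe of multiplying the round count by the unit per-round ceiling $M$, Phase~1 alone would cost $\mathrm{poly}(M,K)\log(1/\delta)/\arm{1}^2$, which is \emph{not} dominated by $\mathrm{poly}(M,K)\log(1/\delta)/\Delta$ (only $\arm{1}\ge\Delta$ holds, not $\arm{1}^2\ge\Delta$). The fix is exactly the gap-scaled accounting you already invoke for Phase~2, applied here with $\arm{1}$ playing the role of the gap: one Grouped Round Robin session incurs at most $2(MK-M+1)M\arm{1}$ regret (Lemma~\ref{lemma:grouped round robin regret}), so $\log(1/\delta)/\arm{1}^2$ sessions cost $\mathrm{poly}(M,K)\log(1/\delta)/\arm{1}\le \mathrm{poly}(M,K)\log(1/\delta)/\Delta$; the same cancellation handles the $\omega(\cdot,\cdot)$-length signalling and capacity-testing blocks. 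As written, your Phase~1 bound does not follow, even though the needed argument is already in your toolkit.
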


\subsection{Phase 1: Communication Arm Identification} 
\label{sec:Phase1}
In this section, we first explain the intuition behind Phase 1, then formally define the protocol for the coordinator in Section \ref*{sec: phase 1 coordinator} and for listeners in Section~\ref*{sec: phase 1 listener}.

The phase begins with all players repeatedly executing full sessions of Grouped Round Robin, where the group size $\psi$ ranges from $1$ to $M$. This process is used to estimate mean rewards and explore the capacity $C_\nu$ of some ``good'' arms with sufficient accuracy. Once the coordinator identifies an arm with a high mean reward that is suitable for communication (denoted $\Carm$), she begins signaling by repeatedly playing $\Carm$ alone over several rounds, thereby disrupting its max capacity. Listeners remain in a passive state advancing the grouped round robin schedule, waiting to detect a signal. If a listener observes that an arm, which is believed to be a likely communication candidate, shows an obvious drop in its estimated maximum capacity $C_\nu$, they infer that this arm is $\Carm$. At the end of Phase 1, all players will agree on the communication arm $\Carm$, and the number of consecutive zero rewards required to send one bit.

A key challenge in this procedure is that players may identify different communication candidates at different times, causing them to begin signaling or listening at unsynchronized stages. For example, the coordinator might start signaling before others are ready to receive. To resolve this issue, all players will build a confidence interval using an \textit{Inflated Confidence Radius} and use the collection of pre-specified, problem independent checkpoints \(\mathcal{T}_{\text{test}}\) defined below to synchronize the start of the communication phase. This helps ensure that players begin communicating at coordinated times.

We define checkpoints \(\mathcal{T}_{\text{test}}\) as time steps where the base-9 floor function value changes:
\(
\mathcal{T}_{\text{test}} = \left\{ t \mid \left\lfloor \frac{2}{B(N_t(\cdot,1))^2} \right\rfloor_9 > \left\lfloor \frac{2}{B(N_{t-1}(\cdot,1))^2} \right\rfloor_9 \right\}
\)
, where for any \(x
\in\mathbb{R}^+\), the base-9 floor function is defined as:
\(
\left\lfloor x \right\rfloor_9 = 9^{\alpha} \text{ for the unique } \alpha \in \mathbb{N} \text{ satisfying } 9^{\alpha} \leq x < 9^{\alpha+1}
\). Note that although the definition involves \( N_t(\cdot,1) \), this value is identical for all arms at the end of each Grouped Round Robin session. As a result, \( \mathcal{T}_{\text{test}} \) is a global checkpoint set shared across all arms.

\paragraph{Inflated Confidence Radius}
    Let \(h\in\mathbb{R}^+\) and \(h>2\), the term \( h \cdot B(N_t(\nu,1)) \) denotes an inflated confidence bound around the estimated mean reward \( \hat{\mu}_{\nu,1}(t) \) for arm \( \nu \), computed from simple (1-grouped) Round Robin observations up to time \( t \). The inflation factor \( h = 5 \) is chosen to ensure that the inequality 
    \begin{equation}
    \hat{\mu}_{\nu,1}^p - 5B(N_t(\nu,1)) \geq 0
    \label{eq:inflated radius greater than mu_hat}
\end{equation}
    is satisfied at approximately the same time for all players. It can be shown that this event occurs when   
    \(
      \frac{\mu_{\nu}}{12} < B(N_t(\nu,1)) \leq \frac{\mu_{\nu}}{4} \), and 
\( 2 / B(N_t(\nu,1))^2 \) falls within an interval that aligns with a unique power of 9 (see Lemma \ref{lem:confInt_bound}). Specifically, the triggering condition satisfies:
    \begin{equation}
        \label{eq:inflated radius greater than mu_hat interval}
        \frac{32}{\mu_{\nu}^2} \leq \frac{g(N_t(\nu,1))}{N_t(\nu,1)} < \frac{288}{\mu_{\nu}^2},
        \end{equation}
   \vspace{-0.02em}     
     and there exists a unique \(\alpha \in\mathbb{N}^+\) such that \(9^\alpha \in [\frac{32}{\mu_{\nu}^2} , \frac{288}{\mu_{\nu}^2})\). This interval structure ensures temporal synchronization of confidence-threshold crossings across players.

\subsubsection{Coordinator Protocol (Player 1)}
\label{sec: phase 1 coordinator}

\begin{algorithm}
        \caption{Phase 1: Coordinator}
        \label{alg:coordinator_protocol_phase1}
        \begin{algorithmic}[1]
        \State Initialize \( \mathcal{S}^1 \gets \emptyset \)
        \While{\(\mathcal{S}^1 = \emptyset\)}
        \State Follow \textsc{GroupedRoundRobin} (GRR)
            \If{ \( \exists \nu \text{ s.t. }5B(N_t(\nu,1)) > \hat{\mu}_{\nu,1}^1(t) \) }
                \State Add \( \nu \) to \( \mathcal{S}^1 \)
                \State Add all \( \nu' \) such that satisfies Eq.(\ref{eq:s1_arms}) to \( \mathcal{S}^1 \)
      
            \EndIf
        \EndWhile
      
        \State Find next checkpoint \( t' \in \mathcal{T}_{\text{test}} \) such that \( t' \geq t \)
      
        \While{\( t<t'\)}
            \State Follow \textsc{GRR} protocol
        \EndWhile
        \If{ \( \exists \nu \in \mathcal{S}^1 \) such that \( C_\nu < M \) }

   \For{$j = 1$ to $\omega(12,t_{\text{comm}})$}
    \State Pull $\Carm$ for the entire duration of one \textsc{GRR} session
    
\EndFor
\State \Return $\Carm,\; \omega(12,t_{\text{comm}})$

      \Else
          \State Switch to playing UCB algorithm on \(\mathcal{S}^1 \) until the end.
      \EndIf
      
        \end{algorithmic}
        \end{algorithm}
 While playing Grouped Round Robin Session scheduling, the coordinator (Player~1) independently estimates the mean reward \( \hat{\mu}_{\nu,1}^1(t) \) of each arm \( \nu \) using solo (\(\psi=1\)) observations. The set of candidate communication arms \( \mathcal{S}^1 \) is initialized as empty. For the first arm \( \nu \) that satisfies Equation \ref*{eq:inflated radius greater than mu_hat}, the arm \( \nu \) is added to \( \mathcal{S}^1 \), and we denote that time step as \(t_0^1\). In addition, all other arms \( \nu' \) satisfying 

\begin{equation}
\label{eq:s1_arms}
    \hat{\mu}_{\nu',1}^1(t_0^1) \geq \frac{3}{5} \hat{\mu}_{\nu,1}^1(t_0^1)
\end{equation} 
are also included in the set. Under the good event \(\Egood\), Lemma~\ref*{lemma:Sone subset Sp} ensures that the optimal arm \(\sigma_1\) is included in \(\mathcal{S}^1\). The coordinator then waits for the appropriate timing to begin signaling the communication arm selected from \( \mathcal{S}^1 \).

To ensure that all non-coordinator players are in a listening phase, starting at time \( t_0^1 \), Player~1 identifies the first checkpoint \( t' \) in the collection \(\mathcal{T}_{\text{test}}\) that occurs after or at \( t_0^1 \). The coordinator then waits for one round, skips \(t'\), and begins signaling at the next checkpoint, which we label as \( t_{\text{comm}} \). Since players operate under Grouped Round Robin Scheduling, \( N_{t_{\text{comm}}}(\cdot,1) \) is the same for all arms at the end of each session. By this time, all players have also determined the capacity \( C_{\nu} \) for each communication arm candidate \( \nu \) in \(\mathcal{S}^1\) (as per Lemma~\ref*{lemma:capacities in Sp}). 

At this stage, the coordinator selects the communication arm as  
\[
\Carm = \arg\max_{\nu \in \mathcal{S}^1,\; C_\nu < M} \hat{\mu}_{\nu,1},
\]
and then exclusively pulls \( \Carm \) for a duration defined by the function \( \omega(12, t_{\text{comm}}) \).

\begin{definition}[Duration Function \(\omega(a, t)\)]
  \label{def:duration_function}
The duration function 
\(
\omega(\gamma, t) = \left\lceil \frac{\log(\delta/4K^2M)}{\log(1 - \gamma B(N_{t}(\cdot,1)))} \right\rceil
\)
gives the number of consecutive zero rewards needed for all players to detect, with probability at least \( 1 - \delta/4K^2 \), that arm \( \nu \) has switched to the constant-zero distribution. It depends only on the time  \( t \) and a constant \( \gamma \in \mathbb{Z}^+ \) satisfying \( \gamma B(N_{t}(\cdot,1)) \leq \mu_{\nu} \). See Appendix~\ref*{sec:technical_appendices} for detailed discussion.
\end{definition}

Using this duration function, the coordinator exclusively pulls \( \Carm \) for \(\omega(12, t_{\text{comm}}) \) consecutive Grouped Round Robin sessions. This generates a recognizable pattern of zero rewards that serves as a binary signal indicating that \( \Carm \) has been designated as the communication arm. By the construction of \(t_0^1\) and \(t_{\text{comm}}\), we have \( \arm{1} \ge 12\,B(N_{t_{\text{comm}}}(\cdot,1)) > \mu_{\Carm}/9 \) under the good event. This yields 
\(
\omega(12, t_{\text{comm}})  = O( \frac{\log(1/\delta)}{\arm{1}} ).
\)

We assume that there exists at least one arm in \( \mathcal{S}^1 \) whose capacity is strictly less than \( M \). Otherwise, if all top arms, including the best arm \( \sigma_1 \), have capacity greater than or equal to \( M \), then coordination via signaling is not necessary, and the coordinator should instead switch to a standard UCB algorithm.  The pseudocode for the coordinator in Phase 1 is given in Algorithm~\ref{alg:coordinator_protocol_phase1}.

\subsubsection{Listener Protocol (Players \texorpdfstring{$p \neq 1$}{p ≠ 1})}
\label{sec: phase 1 listener}
Similar to the coordinator, each listener \( p \neq 1 \) independently estimates \( \hat{\mu}_{\nu,1}^p(t) \) while playing Grouped Round Robin Scheduling. The set \( \mathcal{S}^p \) is initialized as empty. The first time an arm \( \nu_{p^*} \) satisfies Equation \ref*{eq:inflated radius greater than mu_hat}, player \( p \) adds \( \nu_{p^*}   \) to \( \mathcal{S}^p \), and this time step is denoted as \( t_0^p \). Additionally, all other arms \( \nu' \) satisfying the relaxed condition
\[
\hat{\mu}_{\nu',1}^p(t_0^p) \geq \frac{9}{25} \hat{\mu}_{\nu_{p^*},1}^p(t_0^p)
\]
are also included in the set. The constant \( \frac{9}{25} \) (as opposed to \( \frac{3}{5} \) used in \( \mathcal{S}^1 \)) ensures that \( \mathcal{S}^1 \subseteq \mathcal{S}^p \) under  \(\Egood\) by Lemma~\ref*{lemma:Sone subset Sp}. The player then continues grouped exploration while awaiting a potential signal from the coordinator.

After constructing the set \( \mathcal{S}^p \), each listener \( p \neq 1 \) continues performing repeated full sessions of Grouped Round Robin. During this process, the listener actively checks for communication checkpoints. The listener finds the first checkpoint \( t'_p \) in \(\mathcal{T}_{\text{test}}\) after or at \( t_0^p \). Once \( t'_p \) is reached, the listener enters a listening phase to detect a potential signal from the coordinator. 
A signal is defined as the existence of an arm \( \nu_p \in \mathcal{S}^p \) that incurs \(\omega(12, t'_p) \) unexpected consecutive zero rewards immediately after \( t'_p \), under the condition that the known capacity \( C_{\nu_p} \) is not exceeded\footnote{The capacity of arm \( \nu_p \) is marked as known at time \( t \) if 
\(
\frac{\log\left(N_t(\cdot,2) \cdot MK^2 / \delta\right)}{\hat{\mu}_{\nu_p} - B(N_t(\cdot,2))} \leq N_t(\cdot,2).
\)}.  Note that by \(t_\text{comm}\), listener \(p\) has also learned the capacity \( C_{\nu}\) for each \(\nu \in \mathcal{S}^1\) (as per Lemma~\ref*{lemma:capacities in Sp}).  Since \(\omega(12,t_{\text{comm}})\) only depends on \(t_{\text{comm}}\),  \(\omega(12,t_{\text{comm}}) = \omega(12,t'_{p})\) if \( t'_p \) is the same as \( t_{\text{comm}} \). For example, if \( C_{\nu_p} = 3 \) and the player engages in 2-grouped sessions with \( \nu_p \) resulting in \( \omega(12,t'_p) \) consecutive zero rewards after \( t'_p \), this pattern is interpreted as a signal from the coordinator. If no signal is detected, the player resumes grouped exploration until reaching the next checkpoint. If no signal is received from any arm in \( \mathcal{S}^p \) over three consecutive checkpoints starting from \( t_{0}^p \), the player concludes that the coordinator did not find any arm suitable for communication--implying that the best arm has a capacity of at least \( M \). In this case, the player should switch to playing a standard UCB algorithm over \( \mathcal{S}^p \). Phase 1 for listener and coordinator are formally summarized in Algorithm~\ref*{alg:NonCoordinator_protocol_phase1} and \ref*{alg:coordinator_protocol_phase1} in Appendix~\ref*{sec:algorithm_appendix}.

Figure~\ref{fig:phase1_2} illustrates how the shared event---triggered by inflated confidence thresholds---leads to synchronized signaling and listening phases across players. %
Phase~1 ends when all players have either agreed on the communication arm \( \Carm \) or decided to switch to playing UCB. Once a common \( \Carm \) is established, it can be used in subsequent phases to transmit information. To send a single bit with failure probability at most \( \delta \), the coordinator deliberately breaks the expected capacity of \( \Carm \) for \(  \omega(12,t_{\text{comm}})\) consecutive rounds. This process, called \textit{collision-testing}, induces a sequence of zero rewards that all players can reliably interpret as a binary signal.

\begin{lemma}
  \label{lem:identify_comm_arm}
  If the good event \( \mathcal{E}_{\text{good}} \) holds, then with probability at least \( 1 - \delta/K\)%
  , all non-coordinator players \( p \in \{2, \dots, M\} \) will successfully identify the communication arm \( \Carm \) selected by the coordinator, by executing Algorithm~\ref*{alg:coordinator_protocol_phase1} and Algorithm~\ref*{alg:NonCoordinator_protocol_phase1}.
  \end{lemma}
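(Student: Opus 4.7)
I would work throughout under the good event $\Egood$, and decompose the failure event into three disjoint pieces: (A) the coordinator's signaling time $t_{\text{comm}}$ is not contained in any listener's active listening window; (B) some listener $p$ fails to detect the true all-zero pattern on $\Carm$ over the $\omega(12, t_{\text{comm}})$ rounds that the coordinator induces capacity breaks (a \emph{false negative}); (C) some listener $p$ observes $\omega(12, t'_p)$ consecutive zero rewards on a different arm $\nu' \in \mathcal{S}^p \setminus \{\Carm\}$ during one of its three listening checkpoints and misinterprets this as a signal (a \emph{false positive}). The plan is to handle (A) deterministically from the checkpoint construction, and to control (B) and (C) by the exact choice of $\omega$, then union-bound.

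For (A), the synchronization argument, the key observation is the \emph{alignment} between the triggering interval of the inflated condition and the checkpoints $\mathcal{T}_{\text{test}}$. Under $\Egood$, Equation~\ref{eq:inflated radius greater than mu_hat interval} shows that the inflated condition for any arm $\nu$ triggers only while $g(N_t(\cdot,1))/N_t(\cdot,1) \in [32/\mu_\nu^2,\, 288/\mu_\nu^2)$. Because the multiplicative width of this interval is exactly $9$, it contains a unique power of $9$, i.e., a unique checkpoint $t^*$ in $\mathcal{T}_{\text{test}}$. Combined with Lemma~\ref{lemma:Sone subset Sp} (which gives $\mathcal{S}^1 \subseteq \mathcal{S}^p$), I can carry out a short case analysis on the ordering of $t_0^1$, $t_0^p$, and $t^*$: the coordinator's deliberate ``skip one checkpoint'' rule guarantees that $t_{\text{comm}}$ is always at most two checkpoints past any listener's $t_0^p$, so $t_{\text{comm}}$ falls inside the listener's three-checkpoint listening window starting at $t'_p$. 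Since by the time the coordinator starts signaling, all arms in $\mathcal{S}^1$ have known capacity (Lemma~\ref{lemma:capacities in Sp}) and $\Carm \in \mathcal{S}^1 \subseteq \mathcal{S}^p$, the listener is in fact checking $\Carm$ at that checkpoint.

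For (B) and (C), I would invoke the design of $\omega$ directly. By Definition~\ref{def:duration_function}, $\omega(12, t_{\text{comm}})$ consecutive zero rewards occur under the constant-zero distribution with probability $1$ but occur spuriously on an arm with mean $\geq \mu_\nu$ with probability at most $\delta/4K^2M$, giving the per-listener, per-arm failure bound for both false negatives on $\Carm$ and false positives on any $\nu' \in \mathcal{S}^p$. Here I use the fact that every $\nu \in \mathcal{S}^p$ satisfies $\mu_\nu \gtrsim \mu_{\Carm}$ (via the $\frac{9}{25}$ threshold in listener's acceptance rule combined with $\Egood$), so the tail $(1-\mu_\nu)^{\omega(12,t_{\text{comm}})}$ is controlled by the same exponent used in defining $\omega$. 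A union bound over at most $M-1$ listeners, at most $K$ candidate arms in each $\mathcal{S}^p$, and at most $3$ listening checkpoints yields total failure probability $O(\delta/K)$ as required.

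The main obstacle, in my view, is part (A): turning the width-$9$ triggering interval into a clean deterministic guarantee that every listener's $t'_p$ lands within two checkpoints of $t_{\text{comm}}$. In particular, care is needed when $t_0^1$ and $t_0^p$ straddle the unique checkpoint $t^*$ in opposite directions, and when the listener's ``first triggering arm'' $\nu_{p^*}$ differs from the coordinator's — both cases must be ruled out using $\mathcal{S}^1 \subseteq \mathcal{S}^p$ and the fact that $t_{\text{comm}}$ is defined by the coordinator's \emph{second} checkpoint past $t_0^1$. Once this is done, (B) and (C) are routine consequences of the definitions of $\omega$ and $\Egood$.
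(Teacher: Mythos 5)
Your probabilistic core is exactly the paper's argument: the paper's proof fixes a listener, notes that at each of the three checkpoints after $t_0^p$ the listener observes $\omega(12,t')$ sessions, invokes Definition~\ref{def:duration_function} together with the lower bound $12\,B(N_{t'}(\cdot,1))\le \mu_i$ for every $i\in\mathcal{S}^p$ to get a per-arm type-I error of $\delta/(4K^2M)$ (with type-II error zero, since a genuine collision yields zeros deterministically), and then union-bounds over $3$ checkpoints, $K$ arms, and $M$ players to obtain $3\delta/(4K)\le \delta/K$. Your items (B) and (C) reproduce this verbatim, including the role of the $\tfrac{9}{25}$ threshold in guaranteeing that every candidate arm's mean dominates $12\,B(N_{t'}(\cdot,1))$ so that the same exponent controls all candidates. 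Where you diverge is item (A): the paper's proof simply does not argue synchronization at all --- it implicitly assumes $t_{\mathrm{comm}}$ lies inside every listener's three-checkpoint window, deferring to the prose of Section~\ref{sec:Phase1} and the interval/power-of-$9$ structure of Lemma~\ref{lem:confInt_bound}. Your decomposition correctly isolates this as the genuinely delicate step and sketches the right ingredients (the width-$9$ triggering interval, $\mathcal{S}^1\subseteq\mathcal{S}^p$, the coordinator's skip-one-checkpoint rule), but you leave the case analysis unexecuted --- which puts you at the same level of rigor as the paper on that point, not behind it. In short: same proof where the paper has one, plus an honest (if incomplete) accounting of the synchronization gap the paper elides.
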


\begin{figure}[t]
    \centering
    \includegraphics[width=0.46\textwidth]{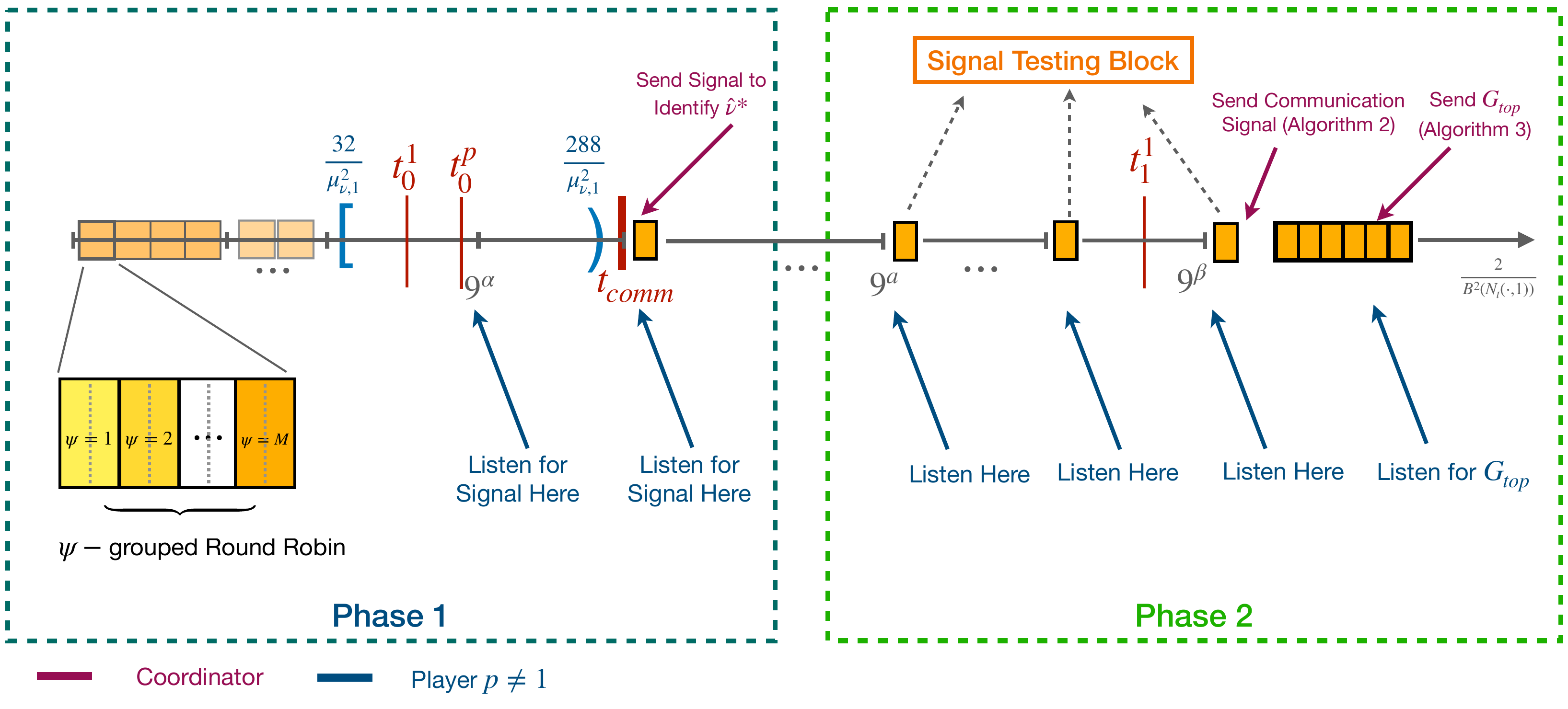}
    \vspace{.1in}
    \caption{
        Illustration of the Coordination Protocol in Phases~1 and~2.
    }
    \label{fig:phase1_2}
\end{figure}

\subsection{Phase 2: Arm Clustering through Connectivity Graphs} 
After identifying the communication arm \( \Carm \) in Phase 1, all players engage in a Simple Round Robin schedule to explore all arms. Starting Phase 2, every time players encounter a checkpoint in \(\mathcal{T}_{\text{test}}\), they insert a \textit{signal testing block} to test for the start of communication. The \textit{signal testing block} is used exclusively for collision testing, and the arm pulls performed during this block are not included in the total pull count tracker \(N(\cdot)\). This process--inserting signal testing blocks into the Simple Round Robin schedule--continues until the coordinator can clearly divide the arms into two distinct clusters. The division is considered clear when the confidence interval of any arm in one cluster is disjoint with the confidence interval of any arm in the other cluster. Once this division is clear, the coordinator uses the communication arm \( \Carm \) to signal the start of communication and transmits the list of arms in the top cluster to all players using \(K\) bits. Figure~\ref*{fig:phase1_2} illustrates the Phase~2 algorithm. The protocol for the coordinator is detailed in Section~\ref{sec:phase2_coordinator}, followed by the protocol for the listeners players in Section~\ref{sec:phase2_listeners}.

\subsubsection{Coordinator Protocol (Player 1)}
\label{sec:phase2_coordinator}
\vspace{-0.1cm}
In Phase 2, the coordinator starts by initializing a \textit{connectivity graph} \( G \) to represent the connections among the arms, where \(\mathcal{C}(G) \) counts the number of connected components within \( G \).

\paragraph{Connectivity Graph \( G \).}
The coordinator maintains a connectivity graph \( G = ( \mathcal{V}, E ) \), where the nodes \( \mathcal{V} \) correspond to the arms. An undirected edge \( (i, j) \in E \) exists if the confidence intervals of arms \( i \) and \( j \) overlap:
\vspace{-0.05cm}
\[
\left| \hat{\mu}_{i,1}^1(t) - \hat{\mu}_{j,1}^1(t) \right| \leq  B(N_t^1(i,1)) + B(N_t^1(j,1) 
\]
The coordinator executes a Simple Round Robin schedule across all arms to update their statistics and the connectivity graph \( G \). 
At each checkpoint \(t\in \mathcal{T}_{\text{test}}\) that occurs after the start of Phase 2, the coordinator inserts a \textit{signal testing block}.
This block involves \(\omega(12,t_{\text{comm}})\) consecutive Simple Round Robin pulls on the arm set \([K] \setminus \Carm\), repeated \(\lceil \frac{M-1}{C_{\Carm}} \rceil\) times. This can also be expressed as repeating \(\text{RR}([K] \setminus \Carm, \omega(12,t_{\text{comm}}))\) for \(\lceil \frac{M-1}{C_{\Carm}} \rceil\) times. Each \(\text{RR}([K] \setminus \Carm, \omega(12,t_{\text{comm}}))\) is designed to allow the transmission of one bit of information to \(C_{\Carm}\) players, repeating it \(\lceil \frac{M-1}{C_{\Carm}} \rceil\) times will enable the coordinator to send a one-bit message to all players. The time step when \(\mathcal{C}(G) > 1 \) is denoted as \( t_1^1 \). The component with the highest estimated mean reward is designated as the top cluster \(G_{\text{top}} = (\mathcal{V}_{\text{top}}, E_{\text{top}})\), while the remaining nodes form the bottom cluster \(G_{\text{bottom}} = (\mathcal{V}_{\text{bottom}}, E_{\text{bottom}})\).

After the event mentioned above happened, the coordinator then searches for the next checkpoint, defined as \(t_{\text{first}} = \min \left\{t: t \in \mathcal{T}_{\text{test}}, t \geq t_1^1\right\}\), where \emph{first} means the first partitioning. After reaching \( t_{\text{first}} \), instead of entering the \textit{signal testing block}, the coordinator uses \( \Carm \) to signal the start of communication through a collision-testing procedure (Algorithm~\ref{alg:collision_testing} in the Appendix) by exclusively pulling 
\(\Carm\) for the same duration as the original block. After signaling the start of communication, the coordinator proceeds to send the top cluster \( G_{\text{top}} \).  In this procedure, a collision indicates a bit value of 1. For each arm \( i \) in the top cluster \( G_{\text{top}} \), the coordinator sends a 1-bit collision to signal its inclusion in the top cluster. This process transmits the encoded message containing the list of arms in the top cluster to all players. The coordinator's protocol in Phase~2 is described in Algorithm~\ref{alg:phase2_coordinator}.

\begin{algorithm}[H]
    \caption{Phase 2: Coordinator}
    \label{alg:phase2_coordinator}
    \begin{algorithmic}[1]
    \State \textbf{Input:} Active players \( \mathcal{P} \), arm set \( \mathcal{A} \), minimum signal testing duration  \(\omega(12,t_{\text{comm}})\)
    \State Initialize connectivity graph \( G  \)
    \While{ \( |\mathcal{C}(G)| \leq 1 \) }
        \If{ \( t \notin \mathcal{T}_{\text{test}} \) }
            \State Perform \textsc{SimpleRoundRobin} (RR) on \( \mathcal{A} \)
            \State Update statistics for all \( i \in \mathcal{A} \) and graph  \( G \)
        \Else
            \State Perform \( \text{RR}([K] \setminus \Carm, \omega(12,t_{\text{comm}})) \) for \(\lceil \frac{M}{C_{\Carm}} \rceil\) times
        \EndIf
    \EndWhile
    \State \( t_{\text{first}} \gets \min\{ t' \in \mathcal{T}_{\text{test}} : t' \geq t \} \)
    \While{ \( t < t_{\text{first}} \) }
        \State Perform \textsc{SimpleRoundRobin} on \( \mathcal{A} \)
        \State Update statistics for all \( i \in \mathcal{A} \) and graph \( G \)
    \EndWhile
    \State CollisionTestingCoordinator (Algorithm~\ref{alg:collision_testing})
    \State \Comment{Signal the Start of Communication}
    \State EncodeArms 
    (Algorithm~\ref{alg:encode_top_cluster})
   \State  \Comment{Send Encoded Message about \(\mathcal{V_\text{top}}\)}
    \State \Return \(\mathcal{V_\text{top}}\)
    \end{algorithmic}
    \end{algorithm}

\subsubsection{Listeners  Protocol (\texorpdfstring{$p \neq 1$}{p ≠ 1})}
\label{sec:phase2_listeners}

Listeners perform Simple Round Robin scheduling on all arms to update their statistics for the arms. After every checkpoint in \(\mathcal{T}_{\text{test}}\), all listeners enter the \textit{signal testing block}. During this block, listeners are grouped into batches of size \(C_{\Carm}\). Any remaining listeners (i.e., if \((M-1) \mod C_{\Carm} \neq 0\)) are assigned to an additional group of size \(C_{\Carm}\). All \(\lceil \frac{M-1}{C_{\Carm}} \rceil\) groups are indexed. Starting with the first group, this group pulls \(\Carm\) exclusively for \(\omega(12,t_{\text{comm}})\) times to test for a potential communication signal from the coordinator (Algorithm~\ref{alg:collision_testing_non_coordinator} in Appendix), while all other players continue exploring with \(\text{RR}([K] \setminus \Carm, \omega(12,t_{\text{comm}})\)). This process is repeated for all groups.

If they detect a signal by the end of a \textit{signal testing block}, which is the block immediately following the checkpoint \(t_{\text{first}}\), they start listening for the message from the coordinator by repeating the \textit{signal testing block} \(K\) times and decode the message from the coordinator     (Appendix~\ref{sec:algorithm_appendix} for details).

\subsection{Phase 3: Capacity Estimation:}
After identifying \( G_{\text{top}} \), the objective is to estimate the capacities \( C_{\nu} \) for all \( \nu \in G_{\text{top}} \), in order to determine which arms require further exploration in the next epoch. Note that Phase~1 guarantees capacity estimation only for arms in the set \( \mathcal{S}^1 \), which may be a strict subset of \( G_{\text{top}} \). Therefore, we initiate a new round of capacity exploration over \( G_{\text{top}} \) using Grouped Round Robin Scheduling (see Algorithm~\ref{alg:capacity_estimation}).

In this phase, the coordinator first computes a lower confidence bound on 
\(
\min_{\nu \in G_{\text{top}}} \mu_\nu,
\)
denoted as \( \gamma \cdot B(N_{t_{\text{first}}}(\cdot,1)) \) for some \( \gamma \in \mathbb{Z}^+ \). The coordinator then encodes and transmits the value of \( \gamma \) using 4 bits (transmitting only its last digit is sufficient). Upon receiving this signal, all listeners perform \( \omega(\gamma, t_{\text{first}}) \) sessions of Grouped Round Robin. By Definition~\ref{def:duration_function}, the duration \( \omega(\gamma, t_{\text{first}}) \) is synchronized across all players. After these sessions, the capacities of all arms in \( G_{\text{top}} \) are known.
\begin{algorithm}[H]
    \caption{Capacity Estimation}
    \label{alg:capacity_estimation}
    \begin{algorithmic}[1]
        \State \textbf{Input:} Set of active players \(\mathcal{P}\), Set of arms \(\mathcal{A}\), Number of sessions \(\omega = \omega(\gamma,t)\)
        \State \textbf{Output:} Minimum capacity \(\psi_a\) for each arm \(a \in \mathcal{A}\)
        
        \State Initialize $\text{capacities}[a] \gets \{1,\ldots,|\mathcal{P}|\}, \forall a \in \mathcal{A}$ \\  \Comment{Initialize all possible capacities}
        \For{$t = 1$ \textbf{to} $\omega$}
           
            \For{$\psi = 1$ \textbf{to} $|\mathcal{P}|$}
                     \State \(\mathcal{P}\) Perform \textsc{$\psi$-GroupedRoundRobin} on \(\mathcal{A}\)

                    \If{observed reward \(r_{a,t}^{(\psi)}\neq 0\) for \(a\in \mathcal{A}\)}
                    \\\color{gray}\Comment{\( r_{a,t}^{(\psi)} \) := the reward at time \( t \) from arm \( a \) when the player is in group size \( \psi \)}\color{black}
                        \State Remove \(\psi\) from $\text{capacities}[a]$
                    \EndIf

            \EndFor
        \EndFor
        \For{each arm \(a \in \mathcal{A}\)}
            \State \(C_a \gets \min(\text{capacities}[a])\) %
        \EndFor
        \State \Return $\{(a, C_a) \mid a \in \mathcal{A}\}$
    \end{algorithmic}
\end{algorithm}

\subsection{Recursion:} 
  
\vspace{-0.1cm}

    At the end of each epoch, players are classified as either \textit{committed}--assigned to a fixed arm and excluded from further exploration--or \textit{active}--continuing to explore arms in \( \mathcal{V}_{\text{top}} \) or \( \mathcal{V}_{\text{bottom}} \). Active players repeat Phase~2 to explore in the next epoch. If the updated \( \mathcal{V}_{\text{top}} \) still contains arms with unknown capacities, Phase~3 is also repeated by the active players. The full recursive process is described in Algorithm~\ref{alg:recursion}.

    Throughout recursion, all players participate in \textit{signal testing blocks} to ensure the reliability of collision-based signaling. This recursive mechanism progressively reduces the coordination problem into smaller sub-instances of MMAB-SAX. Details and pseudocode for the full algorithm can be found in Appendix~\ref*{sec:algorithm_appendix}.
\begin{algorithm}[h]
      \caption{Recursive Allocation Protocol}
      \label{alg:recursion}
      \begin{algorithmic}[1]
        \State \textbf{Input:} Players $[M]$, $\mathcal{V}_\text{top}$, $\mathcal{V}_\text{bottom}$, arm set $[K]$
        \State Initialize \texttt{FIXED} $\gets \emptyset$, $\mathcal{P}_{\text{active}} \gets [M]$
        \While{\texttt{FIXED} does not satisfy Eq.~\ref*{eq:V_arms}}
          \If{$C(\texttt{FIXED} \cup \mathcal{V}_\text{top}) \le M$}
            \State $\texttt{FIXED} \gets \texttt{FIXED} \cup \mathcal{V}_\text{top}$
            \If{$C(\texttt{FIXED}) = M$}
              \State \textbf{Assign} all players to arms in \texttt{FIXED}
              \State \textbf{break}
            \EndIf
            \State Select $\mathcal{P}_{\text{commit}} \subseteq [M]$ of size $C(\texttt{FIXED})$
            \State $\mathcal{P}_{\text{active}} \gets \mathcal{P}_{\text{active}} \setminus \mathcal{P}_{\text{commit}}$
            \State $[K] \gets [K] \setminus \texttt{FIXED}$
            \State $\mathcal{P}_{\text{commit}}$ plays on \texttt{FIXED}
            \State $\mathcal{P}_{\text{active}}$ continue Phase~2 (and Phase~3 if needed) with reduced $[K]$
          \Else
            \State $[K] \gets [K] \setminus \mathcal{V}_\text{bottom}$
            \State $\mathcal{P}_{\text{active}}$ continues Phase~2 on $\mathcal{V}_\text{top} \setminus \texttt{FIXED}$
          \EndIf
        \EndWhile
        \State \textbf{Assign} remaining players to arms in \texttt{FIXED}
      \end{algorithmic}
    \end{algorithm}
\vspace{-0.2cm}
\section{Regret Analysis}
\label{sec:regret_analysis}
\vspace{-0.2cm}
Under the event \(\Egood\), the maximum number of Grouped Round Robin sessions in Phase 1 is 
\(
O(\frac{1}{\arm{1}}^2 \cdot \log(1/\delta)),
\)
since each session contributes at most 
\(
2(MK - M + 1)M \cdot \arm{1}
\)
to the total regret. Consequently, the total regret incurred in Phase 1 is bounded by
\(
O( \frac{(MK - M + 1)M \cdot \log(\log(T)/\delta)}{\arm{1}} ).
\)

In Phase 2, detecting disconnection in the connectivity graph via Simple Round Robin takes at most
  \(
  O(\frac{\log(1/\delta)}{\max_i \gapi{i}^2} 
  )
  \)
  rounds, and each round incurs at most
  \(
  M(K - V)K \cdot \max_i \gapi{i}
  \)
  in regret.
 Communicating \( K +1 \) bits to all players takes
  \(
  O( \log(1/\delta) \cdot \frac{M^2K}{C_{\Carm}}  )
  \)
  regret,  where the term \( C_{\Carm} \) arises from grouping players into groups of size \( C_{\Carm} \), which is necessary to enable collision-based signal detection.
  Since the regret incurred from Simple Round Robin dominates the regret incurred from the \textit{signal testing block}, the total regret in Phase 2 for the first partitioning is:
  \(
  O( \log(\log(T)/\delta)\cdot(\frac{M(K - V)K}{\max_i \gapi{i}} + \frac{M^2K}{C_{\Carm}} )).
  \)

  In Phase 3, 
  we upper bound the duration number of Grouped Round Robin sessions 
  by
  \(
  O( \frac{\log(1/\delta)}{\max_i \gapi{i}} ),
  \) 
  so the total regret incurred in Phase 3 is simplified to
  \(
  O(\frac{(MK - M + 1)M   \arm{1} }{\max_i \gapi{i} }\cdot \log(\log(T)/\delta)).
  \)
Since the recursion process repeats at most \(K-1\) times, the total regret including the regret in Phase 1 is bounded by the following Theorem~\ref{thm:A-CAPELLA_regret}. 
Theorem~\ref{thm:instance_independent_regret} in Appendix  also provides a result with a bound of $O(\sqrt{T})$.

\begin{theorem}[Regret Bound for A-CAPELLA]
  \label{thm:A-CAPELLA_regret}
  Let \( M \) be the number of players, \( K \) the number of arms, and \( V \) the smallest integer such that
  Equation~\ref*{eq:V_arms} holds.
  Let \( \Delta = \min\left( \Delta_{\sigma_{V}, \sigma_{V+1}}, \Delta_{\sigma_{V-1}, \sigma_{V}} \right) \). Then, under the good event \( \mathcal{E}_{\text{good}} \) and setting \( \delta = \frac{1}{T^2MK^2} \), the total regret of the A-CAPELLA algorithm is upper bounded by
  \[
  \mathcal{R}_T = O\left( \log(TMK) \cdot \left[ \frac{K^2 M \cdot \max(M, K - V)}{\Delta} \right] \right).
  \]
  \end{theorem}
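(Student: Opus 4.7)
The plan is to condition throughout on the good event $\mathcal{E}_{\text{good}}$ (which holds with probability at least $1-\delta$) and decompose the total pseudo-regret into contributions from Phase~1 (executed once) and from Phase~2 and Phase~3 (each executed at most $K-1$ times by the recursion in Algorithm~\ref{alg:recursion}). Because $\delta = 1/(T^2 M K^2)$, failure of $\mathcal{E}_{\text{good}}$ contributes at most $O(T M \delta) = O(1/(TK^2))$ to the expected regret, which is absorbed into the stated bound. Throughout, Lemma~\ref{lem:identify_comm_arm} guarantees that every single-bit transmission through $\Carm$ is correctly decoded, so every epoch proceeds as specified by the coordinator.

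For Phase~1, the inflated confidence radius analysis around Equations~\ref{eq:inflated radius greater than mu_hat}--\ref{eq:inflated radius greater than mu_hat interval} shows that the triggering time $t_0^1$ is reached after $O(g(N)/\arm{1}^2) = O(\log(1/\delta)/\arm{1}^2)$ Grouped Round Robin sessions. Each such session comprises $O(MK)$ pulls with per-pull suboptimality at most $\arm{1}$, giving an exploration regret of $O(M^2 K \log(1/\delta)/\arm{1})$. The subsequent signaling of $\Carm$ lasts $\omega(12, t_{\text{comm}}) = O(\log(1/\delta)/\arm{1})$ rounds, a lower-order contribution. For one execution of Phase~2 in epoch $\ell$, let $\Delta^{(\ell)}$ denote the gap separating the current top and bottom clusters. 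Under $\mathcal{E}_{\text{good}}$, the confidence intervals on the relevant means become disjoint once $B(N_t(\cdot,1)) \lesssim \Delta^{(\ell)}$, i.e.\ after $O(\log(1/\delta)/(\Delta^{(\ell)})^2)$ Simple Round Robin rounds. Each such round contains $O(M(K-V))$ pulls of suboptimal arms whose per-pull regret is bounded by $\Delta^{(\ell)}$, giving exploration regret of $O(M K(K-V) \log(1/\delta)/\Delta^{(\ell)})$ per epoch. The interleaved signal testing blocks at checkpoints in $\mathcal{T}_{\text{test}}$ contribute $O(M^2 K \log(1/\delta)/C_{\Carm})$, and the one-time broadcast of $G_{\text{top}}$ adds $O(M^2 K \log(1/\delta))$. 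For Phase~3, the coordinator picks $\gamma$ so that $\omega(\gamma, t_{\text{first}})$ is of order $\log(1/\delta)/\mu_{\sigma_V}$; multiplying by the $O(M^2 K)$ pull count per session and using $\mu_{\sigma_V} \ge \Delta^{(\ell)}$ yields $O(M^2 K \arm{1} \log(1/\delta)/\Delta^{(\ell)})$.

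Summing these bounds over the at most $K-1$ recursive calls and noting that every epoch-level $\Delta^{(\ell)}$ can be lower-bounded by the global bottleneck $\Delta = \min(\gapi{V-1}, \gapi{V})$, the dominant term is $O(K \cdot MK \cdot \max(M, K-V)/\Delta)$. Substituting $\delta = 1/(T^2 M K^2)$ turns the $\log(1/\delta)$ factors into $O(\log(TMK))$, giving the claimed bound. The main obstacle is the bookkeeping step that collapses a $1/\Delta^2$ stopping time into a $1/\Delta$ regret: Simple Round Robin must run for $O(\log(1/\delta)/\Delta^2)$ rounds before cluster separation can be certified, yet each suboptimal pull in that window costs at most $\Delta$, and one must verify that \emph{both} the number of suboptimal pulls per round and the per-pull penalty are simultaneously governed by $\Delta^{(\ell)}$ at each epoch. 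A secondary subtlety is the appearance of $\max(M, K-V)$: per-round regret scales with (active players) $\times$ (explored arms), and it must be checked that the worst-case epoch's product is controlled by whichever factor dominates.
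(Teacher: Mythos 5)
Your proposal follows essentially the same route as the paper's own proof: the identical three-phase decomposition (Phase~1 once, Phases~2--3 repeated at most $K-1$ times), the same per-phase bounds (Grouped Round Robin session counts of order $\log(1/\delta)/\arm{1}^2$, cluster-separation stopping time of order $\log(1/\delta)/(\Delta^{(\ell)})^2$ with per-round regret from Lemma~\ref{lemma:simple round robin regret}, and communication cost $O(M^2K\log(1/\delta)/C_{\Carm})$), followed by lower-bounding each epoch gap by $\Delta$ and substituting $\delta = 1/(T^2MK^2)$. The only cosmetic difference is that you carry an epoch-indexed gap $\Delta^{(\ell)}$ where the paper uses $\max_i\gapi{i}$ for the first partition before collapsing to $\Delta$ in the recursion; the conclusions and intermediate expressions agree.
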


\section{NUMERICAL RESULTS }

Since prior communication-based algorithms assume either known capacities or collision/arm-sharing feedback, they cannot operate in our setting: their protocols fail deterministically and yield linear regret. Thus, comparing against these algorithms would not be informative. Instead, we benchmark our method against the \textsc{Selfish} UCB algorithms \citep{besson18}, Randomized \textsc{Selfish} UCB algorithms \citep{Trinh2021}, and adversarially robust EXP3 \citep{auer2002}.

Figure~\ref{fig:regret_vs_horizon} compares \textsc{A-CAPELLA}, \textsc{Selfish} UCB, and EXP3 under a setting with 5 arms, 4 players, uniform arm capacities of 2, and \(\Delta = 0.5\). The results show that \textsc{A-CAPELLA} achieves logarithmic regret. 
In contrast, when \(T > 0.4 \times 10^6\), \textsc{Selfish} UCB algorithms exhibits significantly worse average regret and substantially higher variance across 100 runs, indicating instability. This instability arises because if two players have identical observations at the same time, the construction of \textsc{Selfish} UCB causes them to repeatedly select the same arm, leading to persistent collisions that may continue indefinitely. Furthermore, when \(T > 0.6 \times 10^6\), \textsc{A-CAPELLA} achieves lower average regret compared to EXP3 whose adversarially robust design give a slower convergence.

\begin{figure}[ht]
  \centering
  \includegraphics[width=0.4\textwidth]{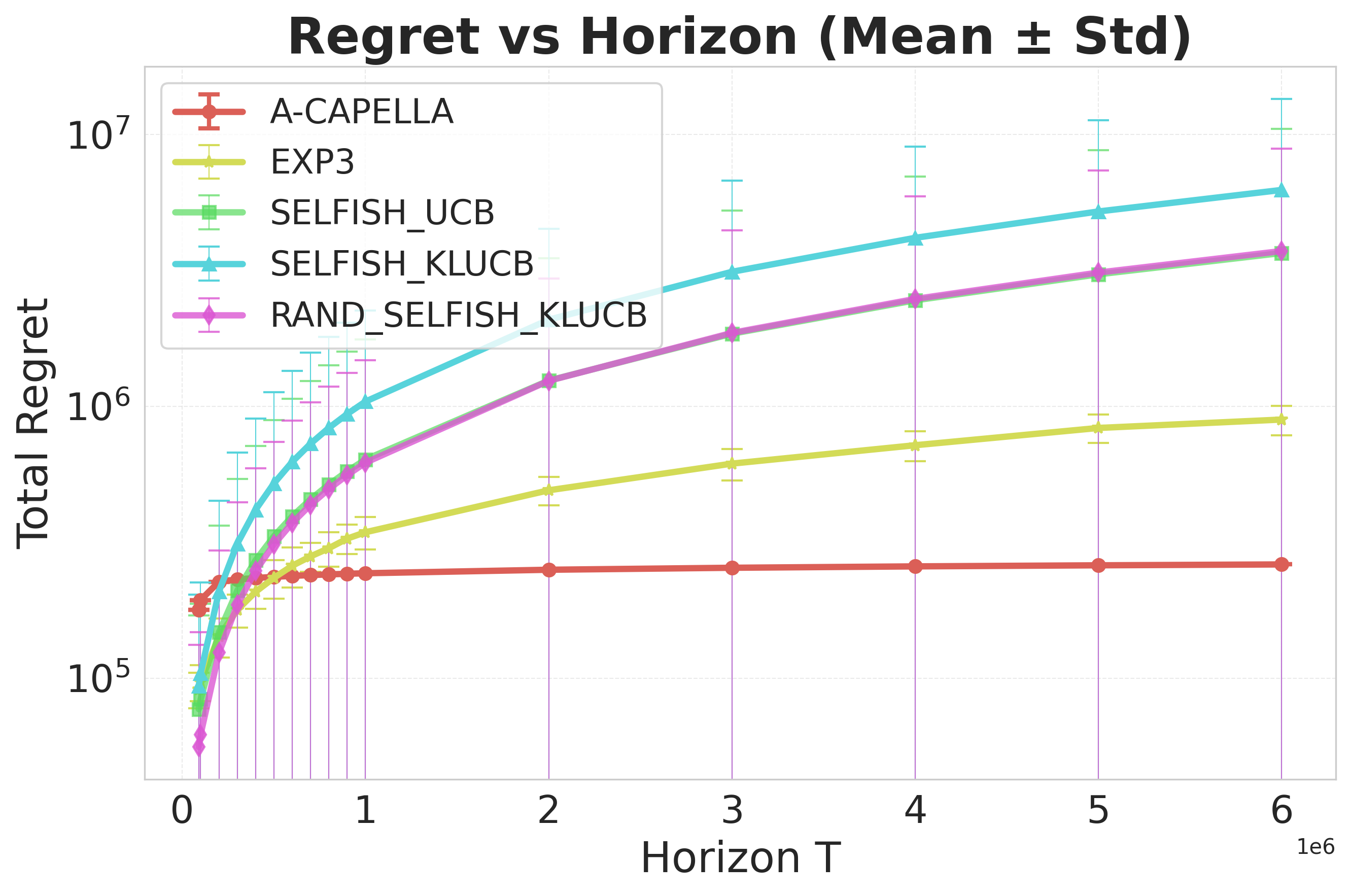}
  \vspace{.1in}
  \caption{
    Comparison of \textsc{A-CAPELLA}, \textsc{Selfish} UCB family, and EXP3. 
    Results are averaged over 100 independent runs; error bars indicate one standard deviation across seeds. 
}

  \label{fig:regret_vs_horizon}
\end{figure}

\section{CONCLUSION}
\label{sec:conclusion}
We present the decentralized algorithm, A-CAPELLA, for the multiplayer multi-armed bandit problem with shareble arms and unkown capacity (MMAB-SAX). Our approach achieves logarithmic regret in the number of rounds, with instance-dependent scaling based on reward gaps. Unlike prior work, we allow arm sharing without direct collision feedback or arm sharing feedback, or assumptions on minimum reward values. Simulations shown in Figure~\ref{fig:regret_vs_horizon} demonstrate sublinear cumulative regret that supports our theoretical guarantees. Overall, this work addresses the no-sensing multiplayer bandit problem under general, unknown capacity constraints.

We acknowledge that our setting lacks a lower bound. Since even the decentralized, no-sensing, unit-capacity case remains open, we contextualize our regret by comparing with prior results. For instance, \citet{pacchiano2023} show  
\(
\mathcal{R}_T \leq \widetilde{\mathcal{O}}\!\left( \frac{M (K - M) K^2 \log T}{\Delta_{\sigma_M, \sigma_{M+1}}} + \mathrm{poly}(K, M, \log T) \right).
\)  
Our regret bound is comparable to those in \citet{pacchiano2023} and \citet{huang2022b}, reflecting the known hardness of decentralized no-sensing MMAB, even with unit capacities.

Prior work has established that some form of communication is essential for achieving instance-dependent guarantees. However, when capacities are unknown and no arm-sharing feedback is available, existing decentralized MMAB algorithms fail, as players cannot exchange information. The added uncertainty of unknown capacities makes coordination especially challenging, since players must first infer capacities before recognizing or interpreting signals.

To overcome this barrier, our work introduces an implicit communication protocol that enables coordination despite the lack of feedback. Building on this, we design an algorithm that, for the first time, achieves a logarithmic, instance-dependent regret guarantee in the fully no-sensing, unknown-capacity setting. We believe our work opens several promising directions: simplifying the algorithmic complexity, improving regret scaling in large-scale settings, and relaxing assumptions such as players knowing $M$ and their own indices. We leave these challenges for future research.

 \newpage
\bibliography{reference}

\clearpage
\appendix
\thispagestyle{empty}

\onecolumn

\aistatstitle{
Supplementary Materials}
\section{TECHNICAL APPENDICES}
\label{sec:technical_appendices}
\subsection{Theorems and proofs}
In this section, we present the detailed proof of theorems and lemmas mentioned in the main paper. 
\begin{lemma}
  \label{lem:confInt_bound}
  Let \( t_0 \) be the first time step at which the following condition holds for \( h > 2 \):
  \begin{equation}
      \hat{\mu}_{\nu,1}(t) - h \cdot B(N_t(\nu,1)) > 0.
      \label{eq:special-round-condition}
  \end{equation}
  Then, under the good event \( \mathcal{E} \), the following inequalities are satisfied:
  \begin{align}
      \frac{\mu_{\nu,1}}{2(h+1)} &< B(N_{t_0}(\nu,1)) \leq \frac{\mu_{\nu,1}}{h - 1}, \label{eq:b-bounds} \\
      \frac{2(h - 1)^2}{\mu_{\nu,1}^2} &\leq \frac{N_{t_0}(\nu,1)}{g(N_{t_0}(\nu,1))} < \frac{8(h + 1)^2}{\mu_{\nu,1}^2}. \label{eq:n-over-g}
  \end{align}
\end{lemma}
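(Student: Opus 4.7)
The plan is to run a short forward--backward argument at the triggering time $t_0$, combining the good event with the fact that $t_0$ is the \emph{first} time the inflated-confidence condition fires, then translating bounds on $B$ into bounds on $N_{t_0}(\nu,1)/g(N_{t_0}(\nu,1))$ via the identity $2/B(n)^2 = n/g(n)$.

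First I would handle the upper bound on $B(N_{t_0}(\nu,1))$. At time $t_0$ the triggering condition gives $\hat{\mu}_{\nu,1}(t_0) > h \cdot B(N_{t_0}(\nu,1))$, while $\Egood$ gives $\hat{\mu}_{\nu,1}(t_0) \le \mu_{\nu,1} + B(N_{t_0}(\nu,1))$. Subtracting yields $(h-1)B(N_{t_0}(\nu,1)) < \mu_{\nu,1}$, i.e.\ $B(N_{t_0}(\nu,1)) \le \mu_{\nu,1}/(h-1)$.

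Next I would get the lower bound by looking at the last moment before $t_0$ at which the 1-grouped counter (and hence $\hat{\mu}_{\nu,1}$ and $B$) changed. Since both $\hat{\mu}_{\nu,1}$ and $B(N_t(\nu,1))$ are piecewise constant between 1-grouped pulls of arm $\nu$, the previous update happened at a counter value $N_{t_0}(\nu,1)-1$, and by minimality of $t_0$ the condition failed then: $\hat{\mu}_{\nu,1}(t_0^-) \le h \cdot B(N_{t_0}(\nu,1)-1)$. Combining with $\Egood$ gives $\mu_{\nu,1} \le (h+1) B(N_{t_0}(\nu,1)-1)$. The key quantitative step is the one-step ratio bound: writing $B(n)^2 = 2g(n)/n$ and using that $g$ is nondecreasing,
\[
\frac{B(N-1)^2}{B(N)^2} \;=\; \frac{N}{N-1}\cdot\frac{g(N-1)}{g(N)} \;\le\; \frac{N}{N-1} \;\le\; 2 \quad (N\ge 2),
\]
so $B(N_{t_0}(\nu,1)-1) \le \sqrt{2}\, B(N_{t_0}(\nu,1)) \le 2\, B(N_{t_0}(\nu,1))$. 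Substituting back gives $\mu_{\nu,1}/\bigl(2(h+1)\bigr) < B(N_{t_0}(\nu,1))$, which is the desired lower bound.

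Finally I would convert the two bounds on $B$ into the stated bounds on $N_{t_0}(\nu,1)/g(N_{t_0}(\nu,1))$ using $N/g = 2/B^2$: the upper bound $B \le \mu_{\nu,1}/(h-1)$ gives $N/g \ge 2(h-1)^2/\mu_{\nu,1}^2$, and the lower bound $B > \mu_{\nu,1}/(2(h+1))$ gives $N/g < 8(h+1)^2/\mu_{\nu,1}^2$. The main obstacle I anticipate is the ratio step $B(N-1)/B(N) \le \sqrt{2}$: one has to verify that the relevant ``previous update'' really occurs at counter value $N_{t_0}(\nu,1)-1$ (so that the $N/(N-1)\le 2$ bound is tight enough to recover the factor $2(h+1)$), and to briefly rule out the degenerate case $N_{t_0}(\nu,1)=1$, which under $\Egood$ cannot trigger the condition when $h>2$ since a single observation cannot yield $\hat{\mu} > h B(1)$ while staying within the $\pm B(1)$ confidence envelope for bounded rewards.
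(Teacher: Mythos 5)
Your proof is correct and follows essentially the same route as the paper's: the upper bound on $B(N_{t_0}(\nu,1))$ comes from combining the triggering condition with the upper confidence bound, and the lower bound comes from the failure of the condition just before $t_0$ together with the lower confidence bound and a one-step ratio bound on $B$. Your version is in fact slightly more careful than the paper's at two points — you prove the ratio bound $B(N-1)\le\sqrt{2}\,B(N)$ explicitly from $\tfrac{N}{N-1}\le 2$ and monotonicity of $g$ (where the paper merely asserts $B(N_{t-1})<2B(N_t)$ ``by continuity''), and you rule out the degenerate case $N_{t_0}(\nu,1)=1$, which the paper does not address.
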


\begin{proof}
Under event \( \mathcal{E} \), we have:
\(
\hat{\mu}_{\nu,1}(t) \geq \mu_{\nu,1} - B(N_t(\nu,1)).
\)
Suppose for contradiction   that for \( t <t_0 \), \( B(N_t(\nu,1)) \leq \frac{\mu_{\nu,1}}{h + 1} \). Then:
\[
\hat{\mu}_{\nu,1}(t) \geq \mu_{\nu,1} - \frac{\mu_{\nu,1}}{h + 1} = \frac{h}{h + 1} \mu_{\nu,1} \geq h \cdot B(N_t(\nu,1)),
\]
which contradicts the assumption that \( \hat{\mu}_{\nu,1}(t) - h \cdot B(N_t(\nu,1)) \leq 0 \) before \(t_0\). 
Thus, by contraposition, we conclude:
\[
B(N_t(\nu,1)) > \frac{\mu_{\nu,1}}{h + 1}
\] for \(t <t_0\). 
By continuity of \(B(\cdot)\), we also have that if \( \delta < \frac{1}{2} \), then it is easy to see that:
\[
B(N_{t-1}(\nu,1)) < 2 B(N_{t}(\nu,1)) \quad \Rightarrow \quad B(N_{t_0}(\nu,1)) >\frac{1}{2}B(N_{t_0 -1}(\nu,1)) > \frac{\mu_{\nu,1}}{2(h + 1)}.
\]

Next, for the upper bound, from the definition of \( \mathcal{E} \), we also have:
\[
\hat{\mu}_{\nu,1}(t) \leq \mu_{\nu,1} + B(N_t(\nu,1)).
\]
Rewriting the condition \( \hat{\mu}_{\nu,1}(t_0) \geq h \cdot B(N_{t_0}(\nu,1)) \), we subtract \( B(N_{t_0}(\nu,1)) \) from the equation above from both sides:
\[
(h - 1) B(N_{t_0}(\nu,1)) \leq \hat{\mu}_{\nu,1}(t_0) - B(N_{t_0}(\nu,1))  \leq \mu_{\nu,1}.
\]
This yields:
\[
B(N_{t_0}(\nu,1)) \leq \frac{\mu_{\nu,1}}{h - 1}.
\]

Combining both bounds, we obtain:
\[
\frac{\mu_{\nu,1}}{2(h + 1)} < B(N_{t_0}(\nu,1)) \leq \frac{\mu_{\nu,1}}{h - 1}.
\]

To convert this into a bound on the normalized sample count, recall:
\[
B(N) = \sqrt{\frac{g(N)}{N}} \quad \Rightarrow \quad \frac{N}{g(N)} = \frac{1}{B(N)^2}.
\]
Therefore,
\[
\left( \frac{\mu_{\nu,1}}{h - 1} \right)^{-2} \leq \frac{N_{t_0}(\nu,1)}{g(N_{t_0}(\nu,1))} < \left( \frac{\mu_{\nu,1}}{2(h + 1)} \right)^{-2},
\]
which simplifies to:
\[
\frac{2(h - 1)^2}{\mu_{\nu,1}^2} \leq \frac{N_{t_0}(\nu,1)}{g(N_{t_0}(\nu,1))} < \frac{8(h + 1)^2}{\mu_{\nu,1}^2}.
\]

When \( h = 5 \), this becomes:
\[
\frac{32}{\mu_{\nu,1}^2} \leq \frac{N_{t_0}(\nu,1)}{g(N_{t_0}(\nu,1))} < \frac{288}{\mu_{\nu,1}^2}.
\]

There must exist a unique \(9^\alpha \in [\frac{32}{\mu_{\nu,1}^2} , \frac{288}{\mu_{\nu,1}^2} )\) for \(\alpha \in \mathbb{N}^+\) according to Lemma 9 in \cite{pacchiano2023}.
\end{proof}

\begin{lemma}
  \label{lemma:Sone subset Sp}
  If \(\Egood\) holds, with probability at least \(1 - \delta/MK\), the optimal arm \(\sigma_1\) lies in the set \(\Sone\) at time \(t_0^1\), and moreover \(\Sone \subseteq \Sp\) for all \(p \in K\) with \(p \neq 1\), at time \(t_0^p\).
  \end{lemma}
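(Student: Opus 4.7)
The plan is to split the lemma into two parts and prove each under the good event $\Egood$. Denote by $\nu^*$ the first arm to satisfy the inflated-confidence condition for player~1 at $t_0^1$, and by $\nu_{p^*}$ the analogous arm for player $p$ at $t_0^p$; write $B_1 = B(N_{t_0^1}(\nu^*,1))$ and $B_p = B(N_{t_0^p}(\nu_{p^*},1))$. Since Grouped Round Robin Scheduling equalizes $N_t(\cdot,1)$ across arms at session boundaries, both $B_1$ and $B_p$ are effectively arm-independent. For $\sigma_1 \in \Sone$, apply Lemma~\ref{lem:confInt_bound} with $h=5$ at $(\nu^*, t_0^1)$ to get $B_1 \leq \mu_{\nu^*}/4$. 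Under $\Egood$, this gives $\hat{\mu}_{\nu^*,1}^1(t_0^1) \leq \mu_{\nu^*} + B_1 \leq 5\mu_{\nu^*}/4$ and $\hat{\mu}_{\sigma_1,1}^1(t_0^1) \geq \mu_{\sigma_1} - B_1 \geq \mu_{\nu^*} - \mu_{\nu^*}/4 = 3\mu_{\nu^*}/4$ (using $\mu_{\sigma_1} \geq \mu_{\nu^*}$). These two bounds combine to $\hat{\mu}_{\sigma_1,1}^1(t_0^1) \geq (3/5)\hat{\mu}_{\nu^*,1}^1(t_0^1)$, placing $\sigma_1$ in $\Sone$ by its defining condition.

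For $\Sone \subseteq \Sp$, fix any $\nu \in \Sone$. First I would translate the $\Sone$ inclusion condition into a mean-level bound via $\Egood$: unwrapping $\hat{\mu}_{\nu,1}^1(t_0^1) \geq (3/5)\hat{\mu}_{\nu^*,1}^1(t_0^1) \geq (3/5)(\mu_{\nu^*} - B_1)$ and using $\hat{\mu}_\nu^1 \leq \mu_\nu + B_1$ yields $\mu_\nu \geq (3/5)\mu_{\nu^*} - (8/5)B_1 \geq \mu_{\nu^*}/5$ after inserting $B_1 \leq \mu_{\nu^*}/4$. Next, Lemma~\ref{lem:confInt_bound} applied to $(\nu_{p^*}, t_0^p)$ gives $B_p \leq \mu_{\nu_{p^*}}/4$ and $\hat{\mu}_{\nu_{p^*},1}^p(t_0^p) \leq 5\mu_{\nu_{p^*}}/4$. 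The target $\Sp$ condition $\hat{\mu}_{\nu,1}^p(t_0^p) \geq (9/25)\hat{\mu}_{\nu_{p^*},1}^p(t_0^p)$ then reduces through $\hat{\mu}_{\nu,1}^p(t_0^p) \geq \mu_\nu - B_p$ to the scalar inequality $\mu_\nu - B_p \geq (9/20)\mu_{\nu_{p^*}}$.

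The hard part is the cross-player comparison between $(\mu_{\nu^*}, B_1)$ and $(\mu_{\nu_{p^*}}, B_p)$, since the triggering arm and time need not coincide across players. The factor $9/25 = (3/5)^2$ is engineered to compound two layers of $\Egood$ slack: the $3/5$ from the $\Sone$ criterion, and the worst-case ratio $(\mu - B)/(\mu + B) \geq 3/5$ that bounds $\hat{\mu}^p/\hat{\mu}^1$ for any arm with $\mu \geq 4B$, a regime Lemma~\ref{lem:confInt_bound} guarantees for triggering arms. To tie $\mu_{\nu^*}$ and $\mu_{\nu_{p^*}}$ together I would case-split on whether $t_0^p \leq t_0^1$ or $t_0^p > t_0^1$: in either direction, the fact that the other player's triggering arm failed the inflated condition at the earlier time, combined with $\Egood$, forces $\mu_{\nu^*}$ and $\mu_{\nu_{p^*}}$ to lie within a bounded constant ratio, while the base-9 checkpoint structure of $\mathcal{T}_{\text{test}}$ tightens this alignment. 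The residual failure probability $\delta/MK$ then arises from a union bound over players and arms needed to invoke these cross-player confidence bounds uniformly.
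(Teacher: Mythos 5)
Your first part is fine and is essentially the paper's own argument: Lemma~\ref{lem:confInt_bound} gives $B_1\le\mu_{\nu^*}/4$, hence $\hat{\mu}_{\nu^*,1}^1(t_0^1)\le\frac54\mu_{\nu^*}$ and $\hat{\mu}_{\sigma_1,1}^1(t_0^1)\ge\frac34\mu_{\nu^*}$, and the ratio $3/5$ follows.

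The second part has a genuine gap. Your reduction asks for $\mu_\nu-B_p\ge\frac{9}{20}\mu_{\nu_{p^*}}$, armed only with $\mu_\nu\ge\frac15\mu_{\nu^*}$ and $B_p\le\frac14\mu_{\nu_{p^*}}$; substituting, you would need $\frac15\mu_{\nu^*}\ge\left(\frac{9}{20}+\frac14\right)\mu_{\nu_{p^*}}=\frac{7}{10}\mu_{\nu_{p^*}}$, i.e.\ $\mu_{\nu^*}\ge\frac72\,\mu_{\nu_{p^*}}$. This can never hold: both triggering arms are near-optimal (each has mean at most $\arm{1}$ and at least a constant fraction of $\arm{1}$), and in the typical case $\nu^*=\nu_{p^*}=\sigma_1$ the requirement reads $1\ge 7/2$. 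No case split on $t_0^1$ versus $t_0^p$, and no appeal to the base-9 checkpoints (which govern the timing of $\mathcal{T}_{\text{test}}$, not the construction of $\Sp$ at $t_0^p$), can manufacture a ratio of $7/2$ between two near-optimal means; the plan as written cannot be closed. The root cause is that you detour through the true mean $\mu_\nu$, paying two confidence radii $B_1+B_p\approx\arm{1}/2$, which exhausts exactly the slack that the constants $3/5$ and $9/25$ were designed to provide. The paper instead anchors both sides to the common reference $\frac{9}{20}\arm{1}$: it upper-bounds the $\Sp$ threshold by $\frac{9}{25}\hat{\mu}_{j^*}^p(t_0^p)\le\frac{9}{25}\bigl(\mu_{j^*}+B_p\bigr)\le\frac{9}{25}\cdot\frac54\arm{1}=\frac{9}{20}\arm{1}$ and lower-bounds the empirical mean of any $i\in\Sone$ by $\frac35\cdot\frac34\arm{1}=\frac{9}{20}\arm{1}$, so each side pays only a single confidence radius. (The paper's displayed chain does compare player 1's estimate at $t_0^1$ against player $p$'s threshold at $t_0^p$ without reconciling the two estimators -- you correctly sensed that this cross-player step is the crux -- but the resolution you propose moves in a direction that provably cannot work.)
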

  
  \begin{proof}
  By definition, \(\sigma_1\) is the optimal arm, so
  \[
    \arm{1} \;=\; \max_{i \in K}\,\mu_i.
  \]
  Let \(\Carm\) be the first arm such that \eqref{eq:inflated radius greater than mu_hat} holds for the first time at time \(t_0^1\), and the following holds at \(t_0^1\) by definition
  \[
    B\bigl(N_{t_0^1}^p(\Carm,1)\bigr)
    \;\le\;
    \frac{\mu_{\Carm}}{4}
    \;\le\;
    \frac{\arm{1}}{4},
  \]
  where \(B(\cdot)\) is the confidence bound radius and \(N_{t_0^1}(\cdot,1)\) denotes the relevant sample count up to time \(t_0^1\).
  
  Since \(\sigma_1\) is truly optimal, it follows that
  \[
    \hat{\mu}_{\sigma_1}(t_0)
    \;\;\ge\;
    \arm{1}
      \;-\;
    B\bigl(N_{t_0^1}(\sigma_1,1)\bigr)
    \;\;\ge\;
    \arm{1}
      \;-\;
    \frac{\arm{1}}{4}
    \;=\;
    \frac{3}{4}\,\arm{1}.
  \]
  By the choice of \(t_0^1\), and any arm \(\nu\in\Sone\) satisfies \(\hat{\mu}_\nu \geq \frac35\hat{\mu}_{\Carm}(t_0^p)\), we also have
  \[
    \frac35\,\hat{\mu}_{\Carm}(t_0^1)
    \;\;\le\;
    \frac35\bigl(\mu_{\Carm} + B(N_{t_0^1}(\Carm,1))\bigr)
    \;\;\le\;
    \frac35\,\arm{1}
      + \frac{3}
      {20}\,\arm{1}
    \;=\;
    \frac34\,\arm{1}
    \;\;\leq\;
    \hat{\mu}_{\sigma_1}(t_0^1).
  \]
  Hence,
  \[
    \hat{\mu}_{\sigma_1}(t_0^1)
    \;\;\geq\;
    \frac35\,\hat{\mu}_{\Carm}(t_0^1),
  \]
  which shows \(\sigma_1 \in \Sone\).
  
  \medskip
  
  Next we prove \(\Sone \subseteq \Sp\) for each player \(p \neq 1\). Fix any player \(p\), and let \(j^*\) be the arm for which \eqref{eq:inflated radius greater than mu_hat} holds first for such a player \(p\). Then by the definition of \(\Sp\), any arm \(\nu\in\Sp\) satisfies \(\hat{\mu}_\nu \geq \frac{9}{25}\hat{\mu}_{j^*}^p(t_0^p)\). Furthemore, 
  \[
    \frac{9}{25}\,\hat{\mu}_{j^*}^p(t_0^p)
    \;\;\le\;
    \frac{9}{25}
    \bigl(\mu_{j^*} + B(N_{t_0^p}(j^*,1))\bigr)
    \;\;\le\;
    \frac{9}{25}\cdot \frac{5}{4}\arm{1}
    \;=\;
    \frac{9}{20}\,\arm{1}.
  \]
  On the other hand, for every \(i \in \Sone\), \(\hat{\mu}_{i}(t_0^1)\ge
    \frac35\,\hat{\mu}_{\Carm}(t_0^1)\), which is at least
  \begin{equation}
  \frac{3}{5}
  \Bigl(\mu_{\Carm} - B\bigl(N_{t_0^1}(\sigma_1,1)\bigr)\Bigr)
  \;\;\ge\;
  \frac{3}{5}
  \Bigl(\arm{1} - B\bigl(N_{t_0^1}(\sigma_1,1)\bigr)\Bigr)
  \;\;\ge\;
  \frac{3}{5} \cdot \frac{3}{4} \arm{1}
  \;=\;
  \frac{9}{20}\,\arm{1}
\end{equation}

  Combining these, $\forall i \in \Sone$
  \[
    \hat{\mu}_{i}(t_0^1)
    \;\;\ge\;
    \frac{9}{20}\,\arm{1}
    \;\;\ge\;
    \frac{9}{25}\,\hat{\mu}_{j^*}^p(t_0^p),
  \]
  which implies $i \in \Sp$. Thus $\Sone \subseteq \Sp$.
  \end{proof}

\begin{lemma}
\label{lemma:capacities in Sp}
Let \(\delta\in (0,1)\). With probability at least \( 1 - \delta/K \), at time \( t_0^1 \), each player \( p \in [M] \) has correctly identified the capacities of all arms \( i \in \mathcal{S}^1 \).
\end{lemma}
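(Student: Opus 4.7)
The plan is to condition on $\mathcal{E}_{\text{good}}$ and use the synchrony of Grouped Round Robin to show that at time $t_0^1$ every player $p$ already has $\psi$-grouped sample means sharp enough to tell $\psi \le C_\nu$ apart from $\psi > C_\nu$ for every $\nu \in \mathcal{S}^1$ and every $\psi \in [M]$.

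First I would convert the membership condition of $\mathcal{S}^1$ into a quantitative lower bound on $\mu_\nu$. Let $\nu_0$ denote the first arm for which player~$1$ observes $\hat{\mu}_{\nu_0,1}^1(t_0^1) \ge 5\, B(N_{t_0^1}(\nu_0,1))$. Lemma~\ref{lem:confInt_bound} with $h=5$ gives $B(N_{t_0^1}(\nu_0,1)) \le \mu_{\nu_0}/4$. Combining this trigger with the membership rule $\hat{\mu}_{\nu,1}^1(t_0^1) \ge \tfrac{3}{5}\hat{\mu}_{\nu_0,1}^1(t_0^1)$ yields $\hat{\mu}_{\nu,1}^1(t_0^1) \ge 3\, B(N_{t_0^1}(\nu_0,1))$. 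Since $N_{t_0^1}(\nu_0,1)=N_{t_0^1}(\nu,1)$ (both equal the number of completed Grouped Round Robin Sessions), a one-sided Hoeffding step on $\mathcal{E}_{\text{good}}$ produces
\[
\mu_\nu \;\ge\; \hat{\mu}_{\nu,1}^1(t_0^1) - B(N_{t_0^1}(\nu,1)) \;\ge\; 2\, B(N_{t_0^1}(\nu,1)) \qquad \text{for every } \nu \in \mathcal{S}^1.
\]

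Next I would lift this sharpness to every group size. A full session of Grouped Round Robin runs $\psi$-Grouped Round Robin for every $\psi \in [M]$; the ``repeat twice'' clause guarantees that each player participates in at least one group of size exactly $\psi$ on every arm per session. Consequently $N_{t_0^1}^p(\nu,\psi) \ge N_{t_0^1}(\nu,1)$ uniformly in $p$, $\nu$, and $\psi$, and therefore $B(N_{t_0^1}^p(\nu,\psi)) \le B(N_{t_0^1}(\nu,1)) \le \mu_\nu/2$.

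Third I would execute the capacity test. If $\psi > C_\nu$, then $\mathcal{D}_{\nu,\psi}$ is the point mass at $0$, so $\hat{\mu}_{\nu,\psi}^p(t_0^1) = 0$ deterministically for every player. If $\psi \le C_\nu$, then on $\mathcal{E}_{\text{good}}$,
\[
\hat{\mu}_{\nu,\psi}^p(t_0^1) \;\ge\; \mu_\nu - B(N_{t_0^1}^p(\nu,\psi)) \;\ge\; \tfrac{1}{2}\mu_\nu \;>\; 0.
\]
Each player therefore recovers $C_\nu$ as the largest $\psi$ for which $\hat{\mu}_{\nu,\psi}^p(t_0^1) > 0$, and since all players share identical sample counts and identical deterministic zeros beyond capacity, they agree on the answer. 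The $1-\delta/K$ bound then comes from a union bound over the at most $K$ arms of $\mathcal{S}^1$ of the confidence failures already absorbed into $g(n) = \log(4n^2 MK/\delta)$.

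The step I expect to be the main obstacle is the second one: one has to track the Grouped Round Robin bookkeeping carefully so that the ``repeat twice'' rule really does deliver $N_{t_0^1}^p(\nu,\psi) \ge N_{t_0^1}(\nu,1)$ uniformly in $p$, $\nu$, and every $\psi \in [M]$, including the awkward edge cases where $\psi$ does not divide $M$ and some players land in a leftover group of size strictly less than $\psi$ during the first pass. Without the second pass those players would record no $\psi$-grouped pulls on some arms, their radius $B(N_{t_0^1}^p(\nu,\psi))$ would be effectively infinite, and the separation argument in Step~3 would collapse.
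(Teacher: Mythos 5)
Your proposal is correct in substance, but it detects capacities by a genuinely different mechanism than the paper. Steps 1 and 2 mirror the paper's proof: both convert membership in $\Sone$ into a lower bound on $\mu_\nu$ of order $\arm{1}$ via Lemma~\ref{lem:confInt_bound}, and both use the Grouped Round Robin bookkeeping to relate $N_{t_0^1}^p(\nu,\psi)$ to $N_{t_0^1}(\nu,1)$ (though the paper only claims $N_{t_0^p}(j,\psi)\ge\tfrac12 N_{t_0^p}(j,1)$, not the factor-$1$ bound you assert; the ``repeat twice'' rule guarantees one full-group pull per session against two solo pulls, so your inequality is off by a factor of $2$ --- harmless here, since $B(n/2)\le\sqrt{2}\,B(n)$ still leaves $B(N_{t_0^1}^p(\nu,\psi))<\mu_\nu$ and the sign test survives). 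Where you diverge is Step 3: you decide $\psi\le C_\nu$ versus $\psi>C_\nu$ by checking whether the $\psi$-grouped empirical mean is strictly positive, which on $\Egood$ is a deterministic consequence of $B(N_{t_0^1}^p(\nu,\psi))<\mu_{\nu}$ and costs no additional failure probability. The paper instead runs an explicit hypothesis test (Lemma~\ref{lem:zero_testing_samples}): observing $\frac{\log(1/\delta')}{\mu_j}$ consecutive zeros rejects $H_0:\mu_{j,\psi}>0$ with type-I error $\delta'$, and the stated $1-\delta/K$ arises from a union bound over these tests with $\delta'=\delta/(N_{t_0^1}(\cdot,2)MK^2)$. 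The paper's route needs only $O(\log(1/\delta')/\mu_j)$ samples per test rather than your $O(g(n)/\mu_\nu^2)$, which matters elsewhere in the algorithm (the duration function $\omega$ is calibrated to the cheaper test), but at time $t_0^1$ both thresholds are already met, so either argument closes the lemma. Your accounting of the $1-\delta/K$ as ``absorbed into the good event'' is a mild mismatch with how the paper allocates failure probability, but it does not invalidate the claim.
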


    \begin{proof}
     Let \( t_0^p \) denote the first time step at which player \( p \) constructs the set \( \mathcal{S}^p \). By this time, for each arm \( j \in \mathcal{S}^p \) and each group size \( \psi \in [M] \), the player has collected at least \( N_{t_0^p}(j,\psi) \) samples. In particular, due to the scheduling structure, we have \( N_{t_0^p}(j,\psi) \geq \frac{1}{2} N_{t_0^p}(j,1) \) for any \( \psi > 1 \). This relation follows from the grouped Round Robin protocol.
    
    From Lemma~\ref{lem:confInt_bound}, we know that the solo-pull sample count satisfies the inequality
    \[
    \frac{N_{t_0^p}(j,1)}{g(N_{t_0^p}(j,1))} \geq \frac{32}{\mu_{j^p}^2},
    \]
    where \( \mu_{j^p} \) denotes the mean reward of the best arm at time \(t_0^p\) for player \(p
    \). This implies
    \[
    N_{t_0^p}(j,1) \geq \frac{32}{\mu_{j^p}^2} \cdot \log\left(4 N_{t_0^p}^2(j,1) MK / \delta\right).
    \]
    Combining this with the earlier inequality, we obtain
    \begin{equation}
     \label{eq:sample until sp}
    N_{t_0^p}(j,\psi) \geq \frac{16}{\mu_{j^p}^2} \cdot \log\left(4 N_{t_0^p}^2(j,1) MK / \delta\right) \geq \frac{16}{\arm{1}^2} \cdot \log\left(4 N_{t_0^p}^2(j,1) MK / \delta\right).
    \end{equation}
    
    Next, we consider the number of samples needed to determine the capacity of arm \( j \). To distinguish whether an arm has capacity at least \( \psi \), we can treat the problem as a binary hypothesis test: \( H_1 \colon \mu_{j,\psi} = 0 \) versus \( H_0 \colon \mu_{j,\psi} > 0 \). This test can be conducted with \( \frac{\log(\delta')}{\log(1 - \mu_j)}\) samples to achieve \( 1 - \delta' \) confidence (Lemma~\ref{lem:zero_testing_samples}). Using the inequality \( \log(1 - x) \leq -x \) for \( x \in (0,1) \) (assuming log is base 2), we find only
    \begin{equation}
    \frac{\log( \delta')}{\log(1 - \mu_j)} \leq \frac{\log(1/\delta')}{\mu_{j}}.
    \label{eq:lower_bound_sp}
    \end{equation} consecutive zeros are required to determine the capacity of an arm with high probability.
    
    All arms \(j\) in \( \mathcal{S}^p \) satisfies \(\hat{\mu}_j\geq \frac{9\hat{\mu}_{j^p}}{25}\) (or \(\hat{\mu}_j\geq \frac{3\hat{\mu}_{j^1}}{5}\) for the coordinator), and the empirical mean is at least%

\begin{equation}
 \frac{9}{25} \hat{\mu}_{\sigma_1}
 \;\;\ge\;
  \frac{9}{25}
  \Bigl(\arm{1} - B\bigl(N_{t_0^p}(\sigma_1,1)\bigr)\Bigr),
\end{equation}
where \(B\bigl(N_{t_0^p}(\sigma_1,1)\bigr) \leq \arm{1}/4\). 
We can further bound the true mean of arms \(j\) in \( \mathcal{S}^p \) by
\(\hat{\mu}_j - B\bigl(N_{t_0^p}(\sigma_1,1)\bigr)\) again, and this give us a lower bound for \(\mu_j \) in terms of \(\Omega(\arm{1})\).  Similarly, for \(i\) in \( \mathcal{S}^1 \) satisfies \(\hat{\mu}_i\geq \frac{3\hat{\mu}_{i^1}}{5}\), and 

\begin{equation}
\mu_{i}
  \;\;\ge\;
  \frac{3}{5}
  \Bigl(\arm{1} - B\bigl(N_{t_0^1}(\sigma_1,1)\bigr)\Bigr)-B\bigl(N_{t_0^1}(\sigma_1,1)\bigr)
  \;\;\ge\;
  \frac{1}{5}\arm{1} ,
\end{equation}

After substituting this lower bound into Eq.~\ref{eq:lower_bound_sp}, we obtain that the number of samples required across players to identify the capacities of arms in \(\Sone\) is smaller than the number already observed up to time \( t_0^1 \), as stated in Eq.~\ref{eq:sample until sp}. Therefore, by setting \(\delta' = \delta/N_{t_0^1}(\cdot,2)MK^2\)  with probability at least \(1- \delta/K\) all players can identify the capacity of all arms in \(\Sone\) by time \(t_0^1\). Furthermore, with simple algebra it is easy to see that each player has known the capacity for arms in \(\Sp\) by \(t_\text{comm}\).

    \end{proof}

    \begin{lemma}
      Let \(\delta\in (0,1)\). For any arm $i$, the minimum number of samples needed to transmit a 1-bit message using collision-testing with a confidence level of $1-\delta$ is given by $\frac{\log(\delta)}{\log(1-\mu_i)}$.
      \label{lem:zero_testing_samples}
  \end{lemma}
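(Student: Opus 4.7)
The plan is to view collision-testing as a binary hypothesis test between ``arm $i$ operates normally'' (rewards drawn i.i.d.\ from $\mathcal{D}_i$ with mean $\mu_i$) and ``arm $i$ is collision-overloaded'' (rewards are deterministically zero). The listener decodes bit $=1$ (overload) if and only if every one of $n$ consecutive observed rewards from arm $i$ equals zero; otherwise bit $=0$. Under this decision rule, the only failure mode is a false positive: the arm is in its normal regime but nonetheless produces an all-zeros block of length $n$. The task reduces to choosing the smallest $n$ for which this event has probability at most $\delta$.

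First I would upper bound the per-round zero-probability in the normal regime. For any random variable $X\in[0,1]$ with $\mathbb{E}[X]=\mu_i$,
\[
\mu_i \;=\; \mathbb{E}[X]\;=\;\mathbb{E}[X\mid X>0]\,\mathbb{P}(X>0)\;\leq\;\mathbb{P}(X>0),
\]
so $\mathbb{P}(X=0)\leq 1-\mu_i$, with equality attained by the Bernoulli$(\mu_i)$ distribution. By independence of the rewards across rounds, the probability of observing $n$ consecutive zeros from a normal arm is at most $(1-\mu_i)^n$, again tight at Bernoulli$(\mu_i)$.

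Next I would solve the resulting inequality. Imposing $(1-\mu_i)^n \leq \delta$ gives $n\log(1-\mu_i)\leq \log \delta$, and since $\log(1-\mu_i)<0$ this rearranges to $n \geq \log(1/\delta)\big/\bigl(-\log(1-\mu_i)\bigr) = \log(\delta)/\log(1-\mu_i)$, which is the stated expression under the natural sign convention for the ratio of two negative logarithms. Minimality follows from the Bernoulli extremiser: any strictly smaller $n$ admits a distribution (namely Bernoulli$(\mu_i)$) for which the false-positive probability strictly exceeds $\delta$, so the $1-\delta$ confidence requirement cannot be met.

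I do not anticipate a substantive obstacle here; the lemma is essentially a one-line consequence of the fact that, among distributions on $[0,1]$ with fixed mean $\mu_i$, the two-point distribution $\{0,1\}$ maximises $\mathbb{P}(X=0)$. The only care I would take is to apply the bound $\mathbb{P}(X=0)\leq 1-\mu_i$ uniformly over all admissible reward distributions rather than implicitly restricting to Bernoulli rewards, so that the sample-count guarantee is valid in the full MMAB-SAX model where $\mathcal{D}_i$ is only assumed supported on $[0,1]$ with mean $\mu_i$.
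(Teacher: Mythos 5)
Your proposal is correct and follows essentially the same route as the paper: both frame collision-testing as a hypothesis test, bound $\mathbb{P}(X=0)\le 1-\mu_i$ from $\mathbb{E}[X]=\mathbb{E}[X\mid X>0]\,\mathbb{P}(X>0)$, and solve $(1-\mu_i)^n\le\delta$ for $n$. Your Bernoulli$(\mu_i)$ extremiser argument for minimality is a small but genuine addition, since the paper's proof only establishes sufficiency of the stated sample count and never justifies the word ``minimum'' in the lemma statement.
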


  \begin{proof}
  We can formulate the problem as a \textbf{hypothesis testing} problem:
  \[ H_0: \mu_i>0 \text{ vs } H_1: \mu_i=0.\]
  
  Denote by \(\mathbb{P}_{H0}\) the underlying probability measure if \(H_0\) is true (sim. \(\mathbb{E}_0\)), and, vice-versa, by \(\mathbb{P}_{H1}\) the underlying probability measure if \(H_1\) is true (sim. \(\mathbb{E}_1\)). Furthermore, denote by \(n\)  the number of trials needed to reject \(H_0\)  with confidence at-least \(1-\delta\).
  
  \paragraph{Probability of getting 0 under \(H_0\).} Define \(p = \mathbb{P}_{H_0}(X = 0)\). $$\mathbb{E}(X) = 0*p + \int_{(0,1]}x dP(x).$$ Let \(\tilde{X} = \frac{1}{1-p}\int_{(0,1]}x dP(x) \), which is the averaged of X under the region \(X>0\), and \(\tilde{X}\leq 1\) because \(X\in [0,1]\). Then $$\mathbb{E}(X) = (1-p)\tilde{X}\leq (1-p).$$
  When \(\mathbb{E}(X) = \mu_i\), \(\mu_i\leq(1-p)\), which implies \(\mathbb{P}_{H0}(X = 0) = p\leq 1-\mu_i\)

  \paragraph{Trials under \(H_0\).} Assume \(H_0\) is true. To determine \(\delta\)  we  can  consider the probability of observing \(n\)  zeros under \(H_0\) and set it equal to our confidence level
  \[
  \mathbb{P}_{H_0}(X_1=0,\dots, X_n=0|\mu_i) \leq  (1-\mu_i)^n \leq  \delta.
  \]
  Hence, we reject \(H_0\) if we observe \( n \geq \frac{\log (1/\delta)}{\mu_i}  > \frac{\log (\delta)}{\log(1-\mu_i)} \) zeros consecutively when testing for capacity. We get the first inequality because \(\log(1-x)\leq -x\) for \(x\in (0,1)\) if log has base less than \(e\). This guarantees that the type-I error is at-most \(\delta\).

  \paragraph{Trials under \(H_1\).} Assume \(H_1\) is true.
\(
  \mathbb{P}_{H_1}(\exists i \, X_i \neq 0 \mid \mu_i) = 0.
  \)
This is true by the definition of \(H_1\) under collision. Thus, the type-II error is 0.

  \end{proof} 
  With Lemma~\ref{lem:zero_testing_samples}, we can show proof of Lemma~\ref{lem:identify_comm_arm} 

  \begin{proof}[Proof of Lemma~\ref{lem:identify_comm_arm}] 
    Fix any non-coordinator player \(p\in\{2,\dots,M\}\).
    After player \(p\) has formed the set \(\mathcal S^{p}\) at time \(t_0^{p}\),
    they watches the next three checkpoints \(t'\in\mathcal{T}_{\text{test}}\) with
    \(t'\ge t_0^{p}\).
    At each such checkpoint the protocol observes exactly
    \(\omega\!\bigl(12,t'\bigr)\) complete Grouped Round-Robin sessions.
    By Definition~\ref{def:duration_function} this duration satisfies
    \[
    \omega\!\bigl(12,t'\bigr)
    \;\ge\;
    \frac{\log\!\bigl(\delta/4K^{2}M\bigr)}{\log\!\bigl(1-\mu_{i}\bigr)},
    \qquad
    \text{for every arm }i\in\mathcal S^{p},
    \]
    because \( 12\,B\!\bigl(N_{t'}(\,\cdot\!,1)\bigr)\le\mu_{i}\) provides a
    uniform lower bound on the mean reward throughout the phase.
    
    \medskip
    Consequently, after each checkpoint player \(p\) collects
    at least the sample size required by
    Lemma~\ref{lem:zero_testing_samples}
    to distinguish ``capacity maintained'' from ``capacity broken unexpectedly''
    on every candidate arm \(i\in\mathcal S^{p}\) with
    type-I error probability at most \(\delta/(4K^{2}M)\).
    
    \medskip
    The collision-testing block is executed at most three times.
    Applying the union bound over the three checkpoints, the
    \(K\) arms, and the \(M\) players yields a
    total type-I error probability no greater than \(\delta/K\).
    Hence, with probability at least \(1-\delta/K\),
    every non-coordinator player correctly detects the arm on which the
    coordinator generates the collision pattern.
    \end{proof}

\begin{lemma}[Regret of a Single Simple Round-Robin Round]
  \label{lemma:simple round robin regret}
  Let \(M\) players interact with \(K\) arms whose means are ordered
  \(\mu_{\sigma_1}> \mu_{\sigma_2}>\dots>\mu_{\sigma_K}\).
  Define \(V\) as the smallest integer satisfying
  \(
  \sum_{i=1}^{V-1} C_{\sigma_i}<M\le\sum_{i=1}^{V} C_{\sigma_i},
  \)
  Denote one full Simple Round-Robin cycle by
  \(R_{\mathrm{sr}}\).
  Then
  \[
  R_{\mathrm{sr}}
  \;=\;
  K\!\sum_{i=1}^{V}\mu_{\sigma_i}\tilde C_{\sigma_i}
  \;-\;
  M\!\sum_{j=1}^{K}\mu_{\sigma_j}
  \;\;\le\;\;
  M\,(K-V)\,(\arm{1}-\arm{K})
  \;\;\le\;\;
  M\,(K-V)\,K\,\max_{i\in[K-1]}\Delta_{\sigma_i,\sigma_{i+1}},
  \]
  where \(\Delta_{\sigma_i,\sigma_{i+1}}=\mu_{\sigma_i}-\mu_{\sigma_{i+1}}\).
  \end{lemma}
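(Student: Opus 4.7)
The plan is to evaluate both terms of $R_{\mathrm{sr}}$ directly and then rewrite the difference in terms of gaps to the best arm; this is what makes the sharper factor $K-V$ (rather than the naive $K$) appear. Over the $K$ time steps of one Simple Round-Robin cycle, each of the $M$ players pulls each of the $K$ arms exactly once, and since $K \ge M$ by the paper's assumption the $M$ arms queried at any single step are distinct, so no collision occurs and the realized expected reward over the cycle equals $M\sum_{j=1}^{K}\arm{j}$. Per step the oracle allocation assigns players to the top $V$ arms respecting capacities, collecting $\sum_{i=1}^{V}\arm{i}\tilde C_{\sigma_i}$ in expectation; multiplying by $K$ for the cycle yields exactly the formula for $R_{\mathrm{sr}}$ in the statement.

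Next I would set $\Delta_i := \arm{1} - \arm{i}$, so that $\Delta_1 = 0$ and $0 \le \Delta_i \le \arm{1}-\arm{K}$, and use the identity $\sum_{i=1}^{V}\tilde C_{\sigma_i} = M$ (which is built into the definition of $\tilde C_{\sigma_V}$) to rewrite both terms as the common benchmark $KM\arm{1}$ minus a sum of gaps:
\[
K\sum_{i=1}^{V}\arm{i}\tilde C_{\sigma_i} \;=\; KM\arm{1} - K\sum_{i=1}^{V}\tilde C_{\sigma_i}\Delta_i, \qquad M\sum_{j=1}^{K}\arm{j} \;=\; KM\arm{1} - M\sum_{j=1}^{K}\Delta_j.
\]
Subtracting, the $KM\arm{1}$ terms cancel; splitting the resulting second sum at $j=V$ and regrouping produces the compact representation
\[
R_{\mathrm{sr}} \;=\; \sum_{i=1}^{V}\bigl(M - K\tilde C_{\sigma_i}\bigr)\Delta_i \;+\; M\sum_{j=V+1}^{K}\Delta_j.
\]

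The only substantive observation, and therefore the main (rather small) obstacle, is to see that the first, \emph{top-arm} sum is non-positive. The hypothesis $K\ge M$ combined with $\tilde C_{\sigma_i}\ge 1$ for every $i\in[V]$ gives $K\tilde C_{\sigma_i}\ge K\ge M$, so $M - K\tilde C_{\sigma_i}\le 0$; together with $\Delta_i\ge 0$ this means the entire top-arm contribution can be dropped. The bound $\tilde C_{\sigma_i}\ge 1$ is immediate for $i<V$ and, for $i=V$, follows by integrality from $\tilde C_{\sigma_V} = M - \sum_{i=1}^{V-1}C_{\sigma_i}\ge 1$, which uses the defining inequality $\sum_{i=1}^{V-1}C_{\sigma_i}<M$. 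It then remains to bound each $\Delta_j$ with $j\ge V+1$ by the monotone upper bound $\Delta_K = \arm{1}-\arm{K}$ to obtain $R_{\mathrm{sr}}\le M(K-V)(\arm{1}-\arm{K})$, and to telescope $\arm{1}-\arm{K} = \sum_{i=1}^{K-1}\gapi{i}\le K\max_{i\in[K-1]}\gapi{i}$ for the final inequality. No genuinely hard estimate appears; the one conceptual point is that the assumption $K\ge M$ is exactly what converts the naive $MK(\arm{1}-\arm{K})$ estimate into the sharper $M(K-V)(\arm{1}-\arm{K})$ claimed.
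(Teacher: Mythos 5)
Your proposal is correct and follows essentially the same route as the paper's proof: both split the round-robin sum at index \(V\), use \(\sum_{i=1}^{V}\tilde C_{\sigma_i}=M\), exploit the sign of \(K\tilde C_{\sigma_i}-M\) to discard the top-arm contribution (the paper does this by replacing \(\mu_{\sigma_i}\) with \(\mu_{\sigma_1}\), which is the same inequality in disguise), and bound the remaining \(K-V\) arms against \(\mu_{\sigma_K}\) before telescoping the gap. Your version is in fact slightly more careful than the paper's, since you state explicitly the conditions \(K\ge M\) and \(\tilde C_{\sigma_i}\ge 1\) that make the coefficient sign argument valid, which the paper uses without comment.
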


\begin{proof}
During one round of simple Round-Robin, all \( M \) players collectively pull arms \( MK \) times. The expected reward can be written as
\(
M \sum_{j=1}^{K} \mu_{\sigma_j},
\)
where \( \mu_{\sigma_j} \) is the expected reward of arm \( \sigma_j \). The optimal reward is 
\(
K \sum_{i=1}^{V} \mu_{\sigma_i} \tilde{C}_{\sigma_i},
\)
where \( \tilde{C}_{\sigma_i} \) represents the optimal number of players allocated to each arm, ensuring that exactly \( \tilde{C}_{\sigma_j} \) players pull arm \( \sigma_j \). The value of \( V \) is determined as the smallest integer satisfying \(
\sum_{i=1}^{V-1} C_{\sigma_i} < M \leq \sum_{i=1}^{V} C_{\sigma_i},
\)
where \( C_{\sigma_i} \) is the maximum number of players that arm \( \sigma_i \) can sustain.

\[
\tilde{C}_{\sigma_i} =
\begin{cases} 
C_{\sigma_i}, & \text{if } i \leq V-1 \\ 
M - \sum_{j=1}^{V-1} C_j, & \text{if } i = V.
\end{cases}
\]
Then, we have
\(
\sum_{i=1}^{V} \tilde{C}_{\sigma_i} = M
\).
The regret collected during one round of simple Round-Robin is given by:
\[
R_{sr} = K \sum_{i=1}^{V} \mu_{\sigma_i} \tilde{C}_{\sigma_i} - M \sum_{i=1}^{K} \mu_{\sigma_j}.
\]
Decomposing the regret into the sum of the difference between the reward collected from optimal allocation and reward collected from round-robin allocation on each arm, we have:
\begin{align*}
R_{sr} &= \sum_{i=1}^{V} \left[ (K \tilde{C}_{\sigma_i}- M )\mu_{\sigma_i} \right]
- M\sum_{i=V+1}^{K} \mu_{\sigma_j}\\
&\leq \sum_{i=1}^{V} \left[ (K \tilde{C}_{\sigma_i}- M )\mu_{\sigma_1} \right] - M\sum_{i=V+1}^{K} \mu_{\sigma_k}\\
&= M(K -V )\mu_{\sigma_1} - M(K-V) \mu_{\sigma_k}\\
&= M(K -V )(\mu_{\sigma_1} - \mu_{\sigma_k})\\
&\leq M(K -V )K\max_i\Delta_{\sigma_i,\sigma_{i+1}}.
\end{align*}

\end{proof}

\begin{lemma}[Regret of a Complete \(\psi\)-Grouped Round-Robin Session]
  \label{lemma:grouped round robin regret}
  Let \(M\) players interact with \(K\) arms under one complete session of \(\psi\)-Grouped Round-Robin for each \(\psi \in [M]\). Let \(\sigma_1, \dots, \sigma_K\) index arms in decreasing order of mean reward, and define \(V\) and \(\tilde{C}_{\sigma_i}\) as in Lemma~\ref{lemma:simple round robin regret}. Then the total regret incurred across all \(\psi\)-Grouped Round-Robin sessions satisfies:
  \[
  R_{r_\psi} \leq 2(MK - M+1)M\arm{1},
  \]
  \end{lemma}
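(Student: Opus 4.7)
The plan is to bound the total session regret as (total time steps) $\times$ (per-step regret upper bound), using the crude observation that every time step costs at most $M\arm{1}$. At any time, the optimal expected reward is at most $M\arm{1}$, since no player earns more than $\arm{1}$ in expectation under any feasible allocation, while the actual expected reward is nonnegative, even if every pull collides and returns zero. Hence the per-step pseudo-regret is uniformly bounded by $M\arm{1}$, and it suffices to bound the total number of time steps consumed by one complete Grouped Round Robin Session.

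For the step count, I would fix $\psi \in [M]$ and analyze the $\psi$-Grouped Round Robin sub-session. Since $K \geq M$, the $\lceil M/\psi \rceil$ groups can all pull distinct arms in every time slot, so one complete pass in which each group visits each arm once uses exactly $K$ time steps. When $\psi \mid M$, no small group arises and a single pass of $K$ steps suffices; when $\psi \nmid M$, the protocol repeats the pass---promoting the leftover players into a full group of size $\psi$ on the second pass---at a total cost of at most $2K$ steps. In particular $\psi = 1$ and $\psi = M$ always divide $M$ and cost only $K$ steps each, while the $M-2$ intermediate values cost at most $2K$ each. Summing across $\psi = 1, \ldots, M$, the worst case total is bounded by $2K + 2K(M-2) = 2K(M-1)$ time steps.

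Combining the two bounds gives total session regret at most $2K(M-1) \cdot M\arm{1}$. Since $K \geq M$, one easily verifies $2K(M-1) \leq 2(MK - M + 1)$, equivalently $M \leq K + 1$, which yields the claimed bound $2(MK-M+1)M\arm{1}$.

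The main bookkeeping obstacle is justifying the double-pass cost of at most $2K$ steps per $\psi$: one must invoke the priority-swap rule from the definition of $\psi$-Grouped Round Robin to argue that two passes indeed suffice for every player to participate in at least one full group of size $\psi$, so that the per-$\psi$ cost does not exceed $2K$. Beyond this, the argument is a direct product of the crude per-step regret bound and a combinatorial step count; there is no need to appeal to gap structure or to the benign reward collected in the $1$-Grouped rounds, since the lemma targets only a worst-case bound scaled by $\arm{1}$.
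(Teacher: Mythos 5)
Your route is genuinely different from the paper's: you replace the paper's exact accounting of collected reward with the crude bound (per-step regret $\le M\arm{1}$) $\times$ (step count), whereas the paper writes out the expected reward $X_{\sigma_i}$ actually harvested on each arm in each $\psi$-grouped pass and subtracts it from the optimum, which is how it arrives at the constant $MK-M+1$. The problem is that your argument only reaches that constant because of the assumption that the $\psi$-grouped sub-session consists of a \emph{single} pass of $K$ steps whenever $\psi\mid M$, so that only $M-2$ of the $M$ sub-sessions are doubled and the total step count is $2K(M-1)$. The paper's own proof contradicts this reading: the reward expression $X_{\sigma_i}=2\mu_{\sigma_i}\bigl[\lfloor M/\psi\rfloor\psi\,\mathbb{I}(\psi\le C_{\sigma_i})+\rho\,\mathbb{I}(\rho\le C_{\sigma_i})\bigr]$ carries the factor $2$ unconditionally, i.e.\ every $\psi\in[M]$ is run for two passes and the session has $2MK$ time steps. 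With that step count your bound becomes $2MK\cdot M\arm{1}=2M^{2}K\arm{1}$, which exceeds the claimed $2(MK-M+1)M\arm{1}$ by exactly $2M(M-1)\arm{1}$, so the stated inequality is not established.

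That additive excess is precisely what the paper recovers by \emph{not} discarding the collected reward: in its chain of inequalities the term $-2(M-1)\sum_{i}\mu_{\sigma_i}C_{\sigma_i}$ credits the reward earned in rounds where groups fit within capacity, and using $C_{\sigma_i}\ge\tilde C_{\sigma_i}$ this turns the coefficient $MK$ into $MK-M+1$. So the step you explicitly wave away (``no need to appeal to \dots the benign reward collected'') is the one that makes the stated constant attainable under the paper's two-pass convention. Your argument does prove a bound of the same order, $O(M^{2}K\arm{1})$, which is all the downstream regret analysis uses; but as a proof of the lemma as stated it either needs the one-pass-when-divisible protocol reading made explicit and reconciled with the definition, or it needs to retain a lower bound on the reward collected, as the paper does.
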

  
  \begin{proof}
  We analyze a complete session of \( \psi \)-Grouped Round-Robin (\( r_\psi \)) for all \( \psi \in [M] \), where each group consists of \( \psi \) players.
  
  For each \( r_\psi \) round, there are \( \left\lceil \frac{M}{\psi} \right\rceil \) groups. The total expected reward collected on each arm during one round is:
  \[
  X_{\sigma_i} = 2\mu_{\sigma_i}\left[\left\lfloor \frac{M}{\psi} \right\rfloor\cdot \psi\cdot \mathbb{I}(\psi \leq C_{\sigma_i}) + \rho\cdot \mathbb{I}(\rho \leq C_{\sigma_i})\right], \quad \text{where } \rho = M \bmod \psi.
  \]
  
  The regret for a complete session of \( \psi \)-Grouped Round-Robin is given by:
  \begin{align*}
  R_{r_\psi} &= 2\sum_{\psi=1}^{M} \left[ K \sum_{i=1}^{V} \mu_{\sigma_i} \tilde{C}_{\sigma_i} - \sum_{i=1}^{K}\mu_{\sigma_i}\left[\left\lfloor \frac{M}{\psi} \right\rfloor\cdot \psi\cdot \mathbb{I}(\psi \leq C_{\sigma_i}) + \rho\cdot \mathbb{I}(\rho \leq C_{\sigma_i})\right] \right]\\
  &\leq 2\sum_{\psi=1}^{M} \left[ K \sum_{i=1}^{V} \mu_{\sigma_i} \tilde{C}_{\sigma_i} - \sum_{i=1}^{K}\mu_{\sigma_i}\left\lfloor \frac{M}{\psi} \right\rfloor\cdot \psi\cdot \mathbb{I}(\psi \leq C_{\sigma_i})\right]\\
  &\leq 2MK \sum_{i=1}^{V} \mu_{\sigma_i} \tilde{C}_{\sigma_i} - 2\sum_{\psi=1}^{M}\sum_{i=1}^{K}\mu_{\sigma_i}(M-1)\mathbb{I}(\psi \leq C_{\sigma_i})\\
  &= 2MK \sum_{i=1}^{V} \mu_{\sigma_i} \tilde{C}_{\sigma_i} - 2(M-1)\sum_{i=1}^{K}\mu_{\sigma_i}\sum_{\psi=1}^{M}\mathbb{I}(\psi \leq C_{\sigma_i})\\
  &= 2MK \sum_{i=1}^{V} \mu_{\sigma_i} \tilde{C}_{\sigma_i} - 2(M-1)\sum_{i=1}^{K}\mu_{\sigma_i}C_{\sigma_i}\\
  &= 2\sum_{i=1}^{V} \left[MK\mu_{\sigma_i} \tilde{C}_{\sigma_i} - (M-1)\mu_{\sigma_i}C_{\sigma_i}\right] -2(M-1)\sum_{i=V+1}^{K}\mu_{\sigma_i}C_{\sigma_i}\\
  &\leq 2\sum_{i=1}^{V} \left[(MK - M+1)\tilde{C}_{\sigma_i}\mu_{\sigma_i} \right] -2(M-1)\sum_{i=V+1}^{K}\mu_{\sigma_i}C_{\sigma_i}
  \end{align*}
  
  To bound the first term:
  \begin{align*}
      & 2\sum_{i=1}^{V} \left[(MK - M+1)\tilde{C}_{\sigma_i}\mu_{\sigma_i} \right]\\
      &\leq 2(MK - M+1)\mu_{\sigma_1} \sum_{i=1}^{V} \tilde{C}_{\sigma_i} \\
      &= 2(MK - M+1)M\mu_{\sigma_1}
  \end{align*}
  \end{proof}

\begin{proof}[Proof of Theorem~\ref{thm:A-CAPELLA_regret}] We decompose the total regret into three components, corresponding to the phases outlined in the algorithm. We first analyze the regret incurred in \textbf{Phase~1} and during the \textbf{first partitioning} performed in \textbf{Phases~2 and~3}, and then analyze the regret accumulated during the \textbf{recursive process}, which repeatedly applies the same partitioning procedure.

  Under the event \( \Egood \), the maximum number of Grouped Round-Robin sessions in Phase~1 is
  \(
  O\left(\frac{1}{\arm{1}^2} \cdot \log\left(\frac{1}{\delta}\right)\right),
  \)
  since each session contributes at most
  \(
  2(MK - M + 1)M \cdot \arm{1}
  \)
  to the total regret, as established in Lemma~\ref{lemma:grouped round robin regret}. Consequently, the total regret incurred in Phase~1 is bounded by
  \[
  O\left( \frac{(MK - M + 1)M \cdot \log\left(\log(T)/\delta\right)}{\arm{1}} \right),
  \]
  where the additional \( \log(T) \) factor arises because the algorithm only checks for signaling at predefined checkpoints.
  
  In Phase~2, detecting a disconnection in the connectivity graph via Simple Round-Robin takes at most
  \[
  O\left(\frac{\log(1/\delta)}{\max_i \gapi{i}^2} \right)
  \]
  rounds, and each round incurs at most
  \(
  M(K - V)K \cdot \max_i \gapi{i}
  \)
  in regret by Lemma~\ref{lemma:simple round robin regret}. Thus, the total regret from Simple Round-Robin incurred in this phase is bounded by
  \[
  O\left( \frac{M(K - V)K \cdot \log(1/\delta)}{\max_i \gapi{i}} \right).
  \]
  
  The regret from collision testing and communication arises from transmitting an `ON' signal and \( K \) bits to all players about the arms in \( \mathcal{V}_{\text{top}} \). To transmit a single bit to one player with failure probability less than \( \delta \), we require
  \(
  O\left( \frac{\log(1/\delta)}{\arm{1}} \right)
  \)
  collision-testing rounds. Since each signal must reach all \( M \) players, and each group can contain at most \( C_{\Carm} \) players, we require approximately \( \lceil \frac{M}{C_{\Carm}} \rceil\) collision-testing blocks per bit. Each round in the collision-testing block may incur up to \( O(M \cdot \arm{1}) \) regret, since all \( M \) players may simultaneously incur regret \( \arm{1} \). Therefore, the total regret from communication is
  \(
  O\left( \log(1/\delta) \cdot \frac{M^2K}{C_{\Carm}} \right).
  \)
  
  Since the regret from Simple Round-Robin dominates the regret from the \emph{signal testing block} that precedes communication, the total regret in Phase~2 for the first partitioning is bounded by:
  \[
  O\left( \log\left(\frac{\log T}{\delta} \right) \cdot \left( \frac{M(K - V)K}{\max_i \gapi{i}} + \frac{M^2K}{C_{\Carm}} \right) \right).
  \]

  In Phase~3, we can ignore the regret incurred from the four collision-testing blocks used to communicate \( \gamma \), and group that regret with the collision-testing regret already accounted for in Phase~2. We now focus on bounding the number of Grouped Round-Robin sessions required for capacity estimation and its regret.

  To test for capacity we need at least \( \lceil \frac{\log(1/\delta)}{\min_{i \in \mathcal{V}_{\text{top}}} \mu_{i}} \rceil \) Grouped Round-Robin sessions. We upper bound the duration (i.e., the number of Grouped Round-Robin sessions) by
  \(
  O\left( \frac{\log(1/\delta)}{\max_i \gapi{i}} \right),
  \)
  so the total regret incurred by Grouped Round-Robin in Phase~3 is simplified to
  \[
  O\left( \frac{(MK - M + 1)M \cdot \arm{1}}{\max_i \gapi{i}} \cdot \log(\log(T)/\delta) \right).
  \]
  
  Combining the regrets from Phase~1, Phase~2 (including communication), and Phase~3 for the first partitioning, we obtain the total regret bound:
  \[
  O\left( \log\left( \frac{\log(T)}{\delta} \right) \cdot \left[ 
  \frac{(MK - M + 1)M}{\arm{1}} 
  + \frac{M(K - V)K + (MK - M + 1)M \cdot \arm{1}}{\max_i \gapi{i}} 
  + \frac{M^2K}{C_{\Carm}} 
  \right] \right).
  \]

  Since the recursion process repeats at most \( K - 1 \) times, the total regret is bounded by:
  \[
  O\left( \log\left( \frac{\log(T)}{\delta} \right) \cdot \left[
  \frac{(MK - M + 1)M}{\arm{1}} 
  + \frac{M(K - V)K^2 + (MK - M + 1)MK \cdot \arm{1}}{\Delta} 
  + \frac{M^2K}{C_{\Carm}} 
  \right] \right),
  \]
  where \( \Delta = \min\left( \gapi{V}, \Delta_{\sigma_{V-1}, \sigma_{V}} \right) \).
  
  Taking \(\delta = \frac{1}{T^2MK^2}\), this expression can be simplified to:
  \[
  \mathcal{R}_T = O\left( \log(TMK) \cdot \left[ \frac{K^2 M \cdot \max(M, K - V)}{\Delta} \right] \right).
  \]
\end{proof}

    \begin{theorem}[Instance-Independent Regret of \textsc{A-CAPALLE}]
      \label{thm:instance_independent_regret}
      If \(\delta = \frac{1}{T^2MK^2}\), the total cumulative regret of A-CAPALLE up to time \( T \) is bounded by
      \[
      \mathcal{R}_T = \tilde{O}\left(
      MK\sqrt{K(K-V)T \max\{ M, K-V\} }
      \right).
      \]
      where the \( \tilde{O} \) notation hides logarithmic factors in \( T, M, K \).
      \end{theorem}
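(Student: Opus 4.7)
The plan is to convert the instance-dependent bound of Theorem~\ref{thm:A-CAPELLA_regret} into a worst-case $\sqrt{T}$ bound by applying the classical interpolation trick phase-by-phase. The key structural observation, which falls out of the phase decomposition in the proof of Theorem~\ref{thm:A-CAPELLA_regret} together with Lemmas~\ref{lemma:simple round robin regret} and \ref{lemma:grouped round robin regret}, is that every exploration phase has: (a)~number of rounds upper bounded by $\tilde O(\log T / D^{2})$ for some instance-dependent gap parameter $D$ (either $\max_i\gapi{i}$, $\arm{1}$, or $\min_{\mathcal V_{\text{top}}}\mu$); and (b)~per-round regret of the form $B\cdot D$ for a polynomial $B$ in $M$ and $K$. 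Combined with the trivial per-round bound $R_{\text{phase}}\le T\cdot BD$, this yields, uniformly in $D$,
\[
R_{\text{phase}}\;\le\;B\cdot \min\!\Bigl(TD,\tfrac{\log T}{D}\Bigr)\;\le\;B\sqrt{T\log T},
\]
where the last step is $\min(x,y)\le\sqrt{xy}$ with $x=TD$ and $y=\log T/D$. The right-hand side is free of the unknown $D$.

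I would first apply this interpolation to the dominant contribution, the Phase~2 Simple Round-Robin exploration. By Lemma~\ref{lemma:simple round robin regret} we have $B=MK(K-V)$ and $D=\max_i\gapi{i}$, so each recursion contributes $\tilde O(MK(K-V)\sqrt{T\log T})$. Summing over the at most $K-1$ recursions, whose durations $T_r$ satisfy $\sum_r T_r\le T$, Cauchy--Schwarz gives $\sum_r\sqrt{T_r}\le\sqrt{KT}$, so the cumulative Phase~2 Simple Round-Robin regret is $\tilde O(MK^{3/2}(K-V)\sqrt{T\log T})$. Since $(K-V)^{1/2}\le\max\{M,K-V\}^{1/2}$, this quantity is bounded by the claimed $\tilde O(MK\sqrt{K(K-V)T\max\{M,K-V\}})$. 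The same interpolation applied to Phase~1 and Phase~3 (Grouped Round-Robin, with $B=\tilde O(M^{2}K)$ by Lemma~\ref{lemma:grouped round robin regret} and $D=\arm{1}$ in Phase~1) produces $\tilde O(M^{2}K\sqrt{T\log T})$ per recursion, which is likewise absorbed by the claimed bound since $M^{2}K\le MK\sqrt{K(K-V)\max\{M,K-V\}}$ whenever $M\le K(K-V)\max\{M,K-V\}/M$, a condition satisfied in all non-degenerate regimes. The Phase~2 communication cost $\tilde O(M^{2}K^{2}\log T)$ is strictly lower order in $T$ and collapses into the $\tilde O$ notation.

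The main obstacle lies in Phase~3, whose round count is naturally expressed as $\tilde O(\log T/\min_{\mathcal V_{\text{top}}}\mu)$ rather than $\tilde O(\log T/D^{2})$, and whose per-round regret scales with $\arm{1}$ rather than with $\min_{\mathcal V_{\text{top}}}\mu$; the symmetric $(B\cdot D,\log T/D^{2})$ interpolation therefore does not apply verbatim. Following the same step used in the proof of Theorem~\ref{thm:A-CAPELLA_regret}, I would upper bound $1/\min_{\mathcal V_{\text{top}}}\mu$ by $1/\max_i\gapi{i}$ once $\mathcal V_{\text{top}}$ has been correctly identified, which reduces Phase~3 to the same interpolation with $D=\max_i\gapi{i}$ and yields the same $\tilde O(M^{2}K\sqrt{T\log T})$ per-recursion bound. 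Collecting all phase contributions, summing over recursions, and absorbing the union-bound probability via $\delta=1/(T^{2}MK^{2})$ as in Theorem~\ref{thm:A-CAPELLA_regret} produces the desired $\mathcal R_T=\tilde O(MK\sqrt{K(K-V)T\max\{M,K-V\}})$.
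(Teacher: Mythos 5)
Your overall strategy is the same as the paper's: the $\min(TD,\log T/D)\le\sqrt{T\log T}$ interpolation is exactly the paper's optimization of the resolution parameter $\varepsilon_1$ in $\mathcal{R}_{\text{part}}(\varepsilon_1)$, written in a different (arguably cleaner) form, and your Cauchy--Schwarz over recursion lengths is a mild refinement of the paper's ``multiply by $K-1$.'' Your Phase~2 Simple Round-Robin term checks out: $MK^{3/2}(K-V)\sqrt{T\log T}$ fits under the claimed bound precisely because $K-V\le\max\{M,K-V\}$.

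The genuine gap is in how you absorb the Phase~1 and Phase~3 contributions. You run the interpolation with $B=\tilde O(M^2K)$ and the trivial bound $T\cdot BD$, obtaining $M^2K\sqrt{T\log T}$ per recursion, and then need $M^2\le K(K-V)\max\{M,K-V\}$ to tuck this under the theorem's bound; when $\max\{M,K-V\}=M$ this reduces to $M\le K(K-V)$, which you dismiss as excluding only ``degenerate'' regimes. But $M>K$ (and hence possibly $M>K(K-V)$) is a central regime of the shareable-arms model --- it is the whole reason capacities exceed one --- so the absorption step genuinely fails there. The underlying problem is that your trivial bound over-counts: a Grouped Round-Robin session spans $\Theta(MK)$ time steps, so a phase contains at most $T/(MK)$ sessions rather than $T$; equivalently, the per-time-step regret is at most $M\arm{1}$, not $M^2K\arm{1}$. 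Balancing $M^2K\log T/\arm{1}$ against $TM\arm{1}$ yields $M^{3/2}\sqrt{KT\log T}$, which is always dominated by the claimed bound. The paper avoids the issue differently: it folds the $M^2K/\varepsilon_1$ term into the same bracket as $MK(K-V)/\varepsilon_1$ and balances the combined $MK\max\{M,K-V\}/\varepsilon_1$ against the single trivial term $(K-V)\varepsilon_1 TM$ --- which is in fact where the $\max\{M,K-V\}$ inside the square root of the theorem statement comes from. In your decomposition that factor appears only as slack from the Phase~2 term, so both the source of the $\max$ and the validity of the bound for $M>K(K-V)$ are lost.
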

      
      \begin{proof}
      We begin by recalling the instance-dependent regret bound for a single round of partitioning, which combines the cost of detecting a valid separation (Phase 2) and estimating capacities within the top set (Phase 3). Let \( V \) be the minimal number of arms that can accommodate all \( M \) players. The regret incurred during the first partition is given by:
      \[
      \mathcal{R}_{\text{part}}(\varepsilon_1)
      = O\left(
      \log(TMK) \cdot \left[
      \frac{M K (K - V)}{\varepsilon_1} +
      \frac{M^2 K}{\varepsilon_1}
      \right]
      + (K - V)  \varepsilon_1  T M 
      \right),
      \]
      where \( \varepsilon_1 \) is the resolution to which the algorithm distinguishes between arm groups.(i.e. \(\varepsilon_1 = \max_i\gapi{i}\)) The first two terms represent the sample complexity of separating arms and estimating capacities; the last term represents the regret incurred from pulling up to \( (K - V) \) indistinguishable suboptimal arms per round across all \( M \) players. Notice that regret only happens if player are pulling arms not belonging to the top arm sets.
      
      To balance the estimation and suboptimal regret terms, we choose \( \varepsilon_1 \) to minimize the total regret. Setting the derivative of the regret expression to zero gives the optimal value:
      \[
      \varepsilon_1^\star = \sqrt{
      \frac{
      \log(TMK) \cdot ( M K \max\{(K - V) ,M\}
      }{
      (K - V) \cdot T M 
      }
      }.
      \]
      
      Substituting \( \varepsilon_1^\star \) into the regret expression yields:
      \[
      \mathcal{R}_{\text{part}} = O\left(
      \sqrt{
      T \cdot \log(TMK) \cdot M K (K - V) \cdot \max(M, K - V)
      }
      \right).
      \]
      
      The recursive partitioning process may repeat at most \( K - 1 \) times, since each round fixes at least one arm. Thus, multiplying the regret per partition by \( K \) gives the total regret:
      \[
      \mathcal{R}_T = O\left(
      M K\sqrt{T K\log(TMK)(K - V)} \cdot \max\{ \sqrt{M}, \sqrt{K - V} \}
      \right).
      \]
      
      Finally, we get the following instance-independent regret bound:
      \[
      \mathcal{R}_T = \tilde{O}\left(
      MK\sqrt{K(K-V)T \max\{ M, K-V\} }
      \right).
      \]
      \end{proof}

\subsection{Relaxation of Assumption~\ref{assumption:distinct-means}} \label{sec:relax1.1}  

In the main text, we assumed that all arm means are distinct:  
\[
\mu_{\sigma(1)} > \mu_{\sigma(2)} > \cdots > \mu_{\sigma(K)} ,
\]  
which guarantees a unique permutation \( \sigma \) of the arms. This assumption simplifies notation and exposition but is not essential. Our algorithm continues to function correctly even when multiple arms share the same mean reward. In such cases, the regret analysis requires a slight refinement.

Let  
\[
\mu_{\sigma(1)} \leq \mu_{\sigma(2)} \leq \cdots \leq \mu_{\sigma(K)} ,
\]  
and partition the arms into groups \( S_1, S_2, \dots, S_m \), where all arms within each group share the same mean reward. The groups are ordered by increasing mean, so that  
\[
\mu_{S_i} < \mu_{S_j} \quad \text{whenever } i < j .
\]  

Recall that \(V\) denotes the number of top arms required to accommodate all players. If the \(V\)-th best arm belongs to group \(S_i\), then the regret bound is governed by the separation between groups, rather than the adjacent gaps between individually ranked arms.  In particular, the critical quantity that appears in the regret bound is  
\[
\min\!\left\{ \mu_{S_i} - \mu_{S_{i-1}}, \; \mu_{S_{i+1}} - \mu_{S_i} \right\},
\]  
which captures the smallest margin separating group \(S_i\) from its neighboring groups. When \(S_i\) is the top or bottom group, only the existing side of the minimum is relevant.  

Thus, when tied means occur, our regret analysis naturally extends by treating groups of equal-mean arms as a single unit. The results remain valid, with the critical gaps defined between groups rather than between individual arms.

\section{Algorithm Pseudocode and Protocol Details}
\label{sec:algorithm_appendix}
\subsection{Main Algorithm and Pseudocode for All Phases}

Before delving into the algorithmic details, we first provide a visual illustration of the simple Round Robin strategy, shown in Figure~\ref{fig:single_rr}.

\begin{figure}[H]
  \centering
  \includegraphics[width=0.35\linewidth]{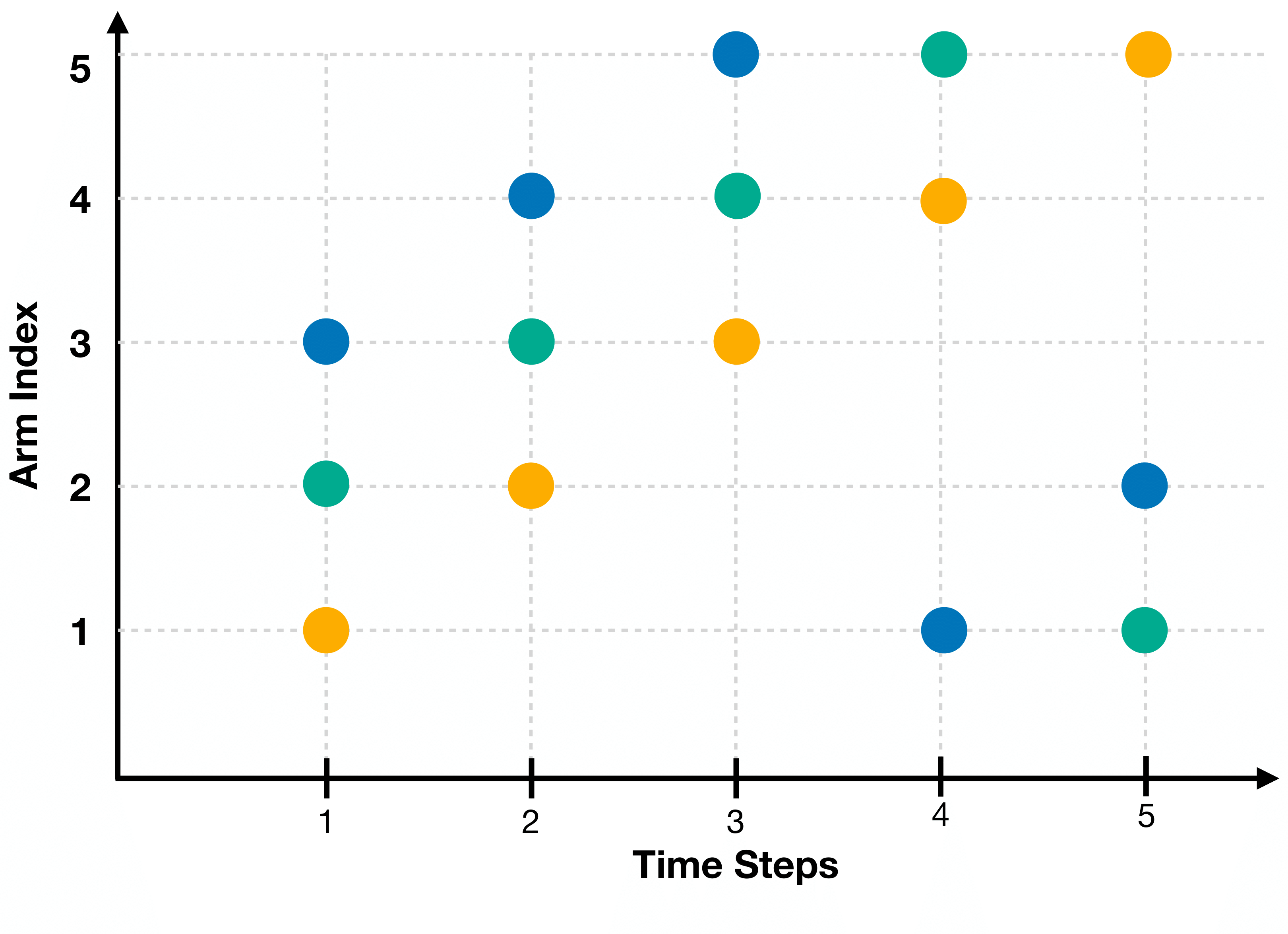}
  \caption{
    Illustration of a simple Round Robin scheduling strategy with 3 players and 5 arms. 
    The x-axis denotes time steps and the y-axis indicates arm indices. 
    Each player cycles through the arms in order: if a player pulls arm \(i\) at time \(t\), 
    they pull arm \(i+1\) in the next round. 
    A round completes when every player has pulled all arms once.
  }
  \label{fig:single_rr}
\end{figure}

\subsubsection*{Phase 1: Communication Arm Selection and Signaling}
In Phase 1, the primary objectives are to employ the deterministic GroupedRoundRobin protocol, which synchronizes the actions of the players, to identify the communication arm and signal it through intentional collision patterns. Synchronization of player actions is achieved using inflated confidence bounds and predefined regular checkpoints, denoted as \(\mathcal{T}_{\text{test}}\).

\(\mathcal{T}_{\text{test}}\) is defined as a set of specific time points where the players check for synchronization and communication signals. These checkpoints are strategically chosen to ensure that all players are aligned in their actions and can detect the communication arm effectively. 
We define checkpoints \(\mathcal{T}_{\text{test}}\) as time steps where the base-9 floor function value changes:
\(
\mathcal{T}_{\text{test}} = \left\{ t \mid \left\lfloor \frac{2}{B(N_t(\cdot,1))^2} \right\rfloor_9 > \left\lfloor \frac{2}{B(N_{t-1}(\cdot,1))^2} \right\rfloor_9 \right\}
\)
, where for any \(x
\in\mathbb{R}^+\), the base-9 floor function is defined as:
\(
\left\lfloor x \right\rfloor_9 = 9^{\alpha} \text{ for the unique } \alpha \in \mathbb{N} \text{ satisfying } 9^{\alpha} \leq x < 9^{\alpha+1}
\).

 In order to create an intentional collision pattern for signaling, players must first independently select a subset of arms, denoted by \( \Sp \), which contains potential communication arms. They then estimate the capacities of all arms in this candidate set. After identifying \( \Sp \), the listeners wait for the subsequent checkpoints. If all players detect that an arm \( \nu \in \Sp \) exhibits an unusual sequence of consecutive zero rewards, despite the arm's capacity not being exceeded, they infer that this arm is the communication arm selected by the coordinator. This is because, while the other players continue to follow a grouped round-robin schedule, the coordinator exclusively pulls the communication arm to reduce its available capacity by 1 in the following Grouped Round Robin Session.

    \paragraph{Coordinator Protocol (Player 1)}
    The main tasks for the coordinator are outlined in the bullet points below. 
    \begin{itemize}
        \item Identify candidate communication arms $\mathcal{S}^1$ through exploration from Group Round Robin Scheduling (lines 4-7 in Algorithm~\ref{alg:coordinator_protocol_phase1}).
        \item Wait until \(t_\text{comm}\), and select the communication arm $\Carm$ with the highest reward and capacity less than $M$ (lines 10-13 in Algorithm~\ref{alg:coordinator_protocol_phase1}).

        \item Signal $\Carm$ through deliberate collision patterns starting from \(t_\text{comm}\)  (lines 14-16 in Algorithm~\ref{alg:coordinator_protocol_phase1}).
    \end{itemize}
    At line 16 in Algorithm~\ref{alg:coordinator_protocol_phase1}, the coordinator will switch to playing a standard UCB algorithm on $\mathcal{S}^1$ if no valid arms in $\mathcal{S}^1$ can be selected as the communication arm. This is because the best arm \(\arm{1}^*\) is guaranteed to be in \(\mathcal{S}^1\) with a probability of at least \(1-\delta\). If no valid arm can be selected, it means that the best arm must have a capacity of at least \(M\); therefore, playing UCB will lead to no collisions over \(\mathcal{S}^1\).

Before selecting the communication arms, the coordinator needs to know the capacities of the arms in \( \mathcal{S}^1 \). Fortunately, she already has sufficient information to estimate these capacities by Lemma~\ref{lemma:capacities in Sp}. An important question we need to address in detail is: how does the coordinator determine the capacities of the arms in \( \mathcal{S}^1 \)?

Define \( \mathbf{r}_{a,t}^{(\psi)} \) to be the sequence of rewards collected by the coordinator up to time step \( t \) from arm \( a \) during \( \psi \)-Grouped Round Robin, where players are grouped into groups of size \( \psi \).
For any  \( a \in \mathcal{S}^1 \), by observing that the sequence \( \mathbf{r}_{a,t_0^1}^{(\psi)} \) consists entirely of zeros for some \( \psi \in [M] \), the capacity \( C_a \) can be determined as
\[
C_a = \min \left\{ \psi \in [M] : \mathbf{r}_{a,t_0^1}^{(\psi)} \text{ is a sequence of all zeros} \right\} - 1.
\]
That is, before time \( t_0^1 \), if the coordinator observes only zero rewards when pulling arm \( a \) in all groups of size \( \psi' \in [M] \), and \( \psi \) is the smallest of such group sizes, then the coordinator infers that the capacity of arm \( a \) is \( C_a = \psi - 1 \).

    \paragraph{Listener Protocol (Players 2 to M)} The main tasks for listeners are outlined in the bullet points below. 
    \begin{itemize}
        \item Identify the set $\mathcal{S}^p$ of  potential communication arms  (lines 1-9 in Algorithm~\ref{alg:NonCoordinator_protocol_phase1}).
        \item Wait for the coordinator's signal of the communication arm (lines 10-26 in Algorithm~\ref{alg:NonCoordinator_protocol_phase1}).
        \begin{itemize}
        \item Decode the communication arm $\Carm$ and communication duration $\omega(12, t_{\text{comm}})$ (lines 17-22 in Algorithm~\ref{alg:NonCoordinator_protocol_phase1}).
        \end{itemize}
    \end{itemize}

    Similar to the protocol of the coordinator, the listener will switch to playing a standard UCB algorithm on $\mathcal{S}^p$ if they do not receive any signal from the coordinator for three consecutive test times. This is because the best arm \(\arm{1}\) is guaranteed to be in \(\mathcal{S}^p\) under \(\Egood\). If no valid arm can be selected, it means that the best arm must have a capacity of at least \(M\).

Line 17 of the algorithm searches for an arm \( \nu_p \in \mathcal{S}^p \) such that its capacity \( C_{\nu_p} \) is known and a valid signal has been received. For any \( \nu_p \in \mathcal{S}^p \), the capacity \( C_{\nu_p} \) is considered known after time step \( t \) if 
\[
n = \min_{\psi \geq 2} N_t(\nu_p, \psi)
\]
is greater than 
\[
\frac{\log(nMK^2/\delta)}{\hat{\mu}_{\nu_p} - B(n)}.
\]
Since Lemma~\ref{lemma:capacities in Sp} guarantees that the capacities of all arms in the set \( \mathcal{S}^1 \) are known to all players at \(t_0^1\), it follows that by time \( t_{\text{comm}} \), every player has determined the capacities of all arms in \( \mathcal{S}^1 \). Thus, they are ready to listen for the signal sent by the coordinator, that is, they are prepared to detect the intentionally disrupted collision pattern caused by the coordinator.

Listeners determine the capacities of arms in the same way as the coordinators as introduced in the last section. For example, before time \( t_{\nu_p} \), which is the time player \(p\) figures out the capacity of arm \(\nu_p\), if the player observes only zero rewards when pulling arm \( a \) in all groups of size \( \psi' \in [M] \), and \( \psi \) is the smallest of such group sizes, then the player infers that the capacity of arm \( a \) is \( C_a = \psi - 1 \).

    \begin{algorithm}     
        \caption{Detect Communication Arm (Player \( p \neq 1 \))}
        \label{alg:NonCoordinator_protocol_phase1}
        \begin{algorithmic}[1]
        
        \State Initialize \( \mathcal{S}^p \gets \emptyset \), \(t_{\text{comm}}\gets \emptyset \)
        
        \While{\( \mathcal{S}^p = \emptyset \)}
            \State Perform \textsc{GroupedRoundRobin} (exploration phase)
            \If{ \( \exists \nu_{p^*} \text{ s.t. } 5B(N_t(\nu_{p^*},1)) > \hat{\mu}_{\nu_{p^*},1}^p(t) \) }
                \State Set \( t_0^p \gets t \)
                \State Add \( \nu_{p^*} \) to \( \mathcal{S}^p \)
                \State Add all \( \nu' \) such that \( \hat{\mu}_{\nu',1}^p(t_0^p) \geq \frac{9}{25} \hat{\mu}_{\nu_{p^*},1}^p(t_0^p) \) to \( \mathcal{S}^p \)
            \EndIf
        \EndWhile
        
        \State Set \( \texttt{no\_signal\_rounds} \gets 0 \)
        
        \While{\( \texttt{no\_signal\_rounds} < 3 \)}
            \State Select next checkpoint \( t'_p\in\mathcal{T}_{\text{test}} \) s.t. \(t'_p \geq t_0^p\) from a predefined set of test times
            \While{\( t < t'_p  \)}
                \State Perform \textsc{GroupedRoundRobin} (waiting for checkpoint)
            \EndWhile
            \State Compute \( \omega(12, t'_p), \forall \nu_p \in \mathcal{S}^p \) based on current observation
        
            \If{\(C_{\nu_p}\) is known and a valid signal is received from some \( \nu_p \in \mathcal{S}^p \) }
            \\ \Comment{\( \nu_p \) gets \( \omega(12, t'_p) \) consecutive zero rewards after \( t'_p \) and \( C_{\nu_p} \) not exceeded}
                \State Set \( \Carm \gets \nu_p \) as the communication arm
                \State Set \( t_{\text{comm}} \gets t'_p \)
                \State \Return \( \Carm \) and  \( \omega(12, t_{\text{comm}}) \)
            \Else
                \State \( \texttt{no\_signal\_rounds} \gets \texttt{no\_signal\_rounds} + 1 \)
                \State \(t_0^p \gets t'_p \)
            \EndIf
        \EndWhile
        \State \textbf{Fallback:} switch to UCB algorithm over \( \mathcal{S}^p \) starting from \( t_0^p \)
        \end{algorithmic}
        \end{algorithm}

\newpage
\subsubsection*{Phase 2: Connectivity Graph Partitioning}
After identifying the communication arm \( \Carm \) in Phase~1, players follow a \textsc{SimpleRoundRobin} schedule to explore all arms. At each checkpoint \( t \in \mathcal{T}_{\text{test}} \), they insert a \textit{signal testing block} for collision detection, which does not update the pull count \( N(\cdot) \). This process continues until the coordinator can separate arms into two clusters with disjoint confidence intervals. At that point, the coordinator uses \( \Carm \) to signal and transmit the top cluster arms using \( K \) bits. Phase~2 may be repeated in future epochs to further refine the partitioning and ultimately identify the optimal \( V \) arms.

\paragraph{Coordinator Protocol:} The protocol for the coordinator is detailed in Algorithm~\ref{alg:phase2_coordinator}. It is important to note that this phase takes in the set of active players \( \mathcal{P} \), the active arm set \( \mathcal{A} \) (which initially is the full arm set \( [K] \), and is updated in the recursive phase), and the minimum signal testing duration \(\omega(12,t_{\text{comm}})\) as input. The coordinator is not restricted to signaling only the top component in the graph \( G_{\text{top}} \); if the top \( V \) arms and their capacities are already known, the coordinator may directly inform all players of the final selected set.

The task of the coordinator can be summarized as follows:
\begin{itemize}
    \item Initialize the connectivity graph \( G \) for the arm set \( \mathcal{A} \) (lines 1-2 in Algorithm~\ref{alg:phase2_coordinator}).
    \item While the number of connected components in \( G \) is not greater than 1:
    \begin{itemize}
        \item Run \textsc{SimpleRoundRobin} schedule while updating the connectivity graph \( G \).
        \item If \( t \in \mathcal{T}_{\text{test}} \), perform a signal testing block for collision detection (line 8 in Algorithm~\ref{alg:phase2_coordinator}). The signal testing block is done over the full arm set \( [K] \), and all players join the signal testing block.
    \end{itemize}
    \item When the number of connected components in \( G \) is greater than 1, the coordinator can signal the start of communication and the top cluster arms using \( \Carm \) (lines 11-20 in Algorithm~\ref{alg:phase2_coordinator}).
\end{itemize}

    \paragraph{Listeners Protocol:} The protocol for the Listeners is detailed in Algorithm~\ref{alg:phase2_listeners}. This algorithm receives the set of active players \( \mathcal{P} \), the active arm set \( \mathcal{A} \) (which initially is the full arm set \( [K] \), and is updated in the recursive phase), and the minimum signal testing duration \(\omega(12,t_{\text{comm}})\) as input. The task of the listeners is to detect the start of communication at every checkpoint \( t \in \mathcal{T}_{\text{test}} \), once they receive the signal from the coordinator, they decode the message about the top cluster arms from the coordinator (lines 8 in Algorithm~\ref{alg:phase2_listeners}).
    \begin{algorithm}[H]
        \caption{Phase 2: Listeners}
        \label{alg:phase2_listeners}
        \begin{algorithmic}[1]
        \State \textbf{Input:} Active players \( \mathcal{P} \), arm set \( \mathcal{A} \), minimum signal testing duration \( \omega(12, t_{\text{comm}}) \)
        \State \( \text{signal} \gets 0 \)
        \While{ \( \text{signal} == 0 \) }
            \If{ \( t \notin \mathcal{T}_{\text{test}} \) }
                \State Run \textsc{SimpleRoundRobin} on \( \mathcal{A} \)
                \State Update statistics for all \( i \in \mathcal{A} \)
            \Else
                \State \(\text{signal, }\mathcal{V_\text{top}} \gets \) \text{Listen} (Algorithm~\ref{alg:listen})
                \Comment{Listen and  Decoded Message about \(\mathcal{V_\text{top}}\)}
            \EndIf
        \EndWhile
        \State \Return \(\mathcal{V_\text{top}}\)
        \end{algorithmic}
        \end{algorithm}

\subsubsection*{Phase 3: Capacity Estimation}
At the end of Phase~2, if the capacities of all arms transmitted by the coordinator are not yet known, the coordinator must first transmit the parameter \( \gamma \), which governs the duration \( \omega(\gamma, t_{\text{first}}) \) used for estimating the capacity of arms in the top cluster. Here, \( t_{\text{first}} \) refers to the checkpoint corresponding to the first partition. As the recursive process continues, this will be updated to \( t_{\text{second}} \), and so on.

It is important to note that since all players participate in the collision testing blocks to help disseminate the coordinator's message, even fixed players are aware of which arms were selected in the latest partition. If any transmitted arm has not yet been processed in Phase~3, all players will proceed to Phase~3 to help the coordinator send information about \(\gamma\) to all the active players. Then, active players start finding capacities, while committed players play a fixed arm.

The main goal of transmitting \( \gamma \) is to ensure synchronized capacity estimation across all active players. Specifically, while all active players have estimated the mean rewards of the active arms up to the same level of accuracy, i.e., 
\[
|\mu_a - \hat{\mu}_a^p| \leq B(N_{t_{\text{first}}}(a,1)) \quad \text{and} \quad |\hat{\mu}_a^{p'} - \hat{\mu}_a^p| \leq 2B(N_{t_{\text{first}}}(a,1)),
\]
where \(p\) and \(p'\) are two different active players, this allows all active players to construct a consistent lower bound for each transmitted arm using 
\[
\gamma \cdot B\left(N_{t_{\text{first}}}(a', 1)\right), \quad \text{where } a' = \arg\min_{a \in \mathcal{A}} \hat{\mu}_a^p
\]

Each player \( p \) can compute the largest \( \gamma_p \in \mathbb{N} \) such that
\[
\gamma_p \cdot B(N_{t_{\text{first}}}(a',1)) < \min_{a \in \mathcal{A}} \hat{\mu}_a^p - B(N_{t_{\text{first}}}(a,1)).
\]
Given the bound \( |\hat{\mu}_a^{p'} - \hat{\mu}_a^p| \leq 2B(N_{t_{\text{first}}}(a,1)) \), it follows that \( |\gamma^p - \gamma^{p'}| < 10 \). Therefore, the coordinator only needs to send the last digit of \( \gamma \) to all active listeners, which can be done through 4 rounds of collision testing to encode one of 10 possible digits.

Afterward, only the active players participate in the Capacity Estimation Protocol, while fixed players continue playing their assigned arms. It is straightforward to verify that the capacity estimation process concludes before the next checkpoint, ensuring all players are ready for the next collision testing block. Once Phase~3 ends, listener players are either assigned to a fixed arm or return to Phase~2 to continue the exploration process.

\subsubsection*{Phase 4: Recursive Coordination}
\begin{enumerate}
    \item \textbf{Recursive Phase Execution}:
    \begin{itemize}
        \item Repeat Phases 2-3 on high-reward cluster
        \item Skip Phase 3 if capacities known
        \item Continue until optimal $V$ arms identified
    \end{itemize}
\end{enumerate}
See Algorithm~\ref{alg:recursion}

\subsection{Collision Signaling Details}
\begin{enumerate}
    \item \textbf{Bit Encoding} is conducted using the following method:
    \begin{itemize}
        \item \textbf{1 bit}: Pull the communication arm for \( \omega(12, t_{\text{comm}}) \) rounds to induce a sufficient number of consecutive zero rewards.
        \item \textbf{0 bit}: Execute Grouped Round Robin scheduling for \( \omega(12, t_{\text{comm}}) \) Session.
    \end{itemize}

\item \textbf{Coordinator Encoding and Sending Signals}:

The following algorithm tell how the coordinator sends a 1-bit to listeners.
\begin{algorithm}[H]
    \caption{CollisionTestingCoordinator}
    \label{alg:collision_testing}
    \begin{algorithmic}[1]
    \State \textbf{Input:} Communication arm \(\Carm\), duration \(w\)
    \For{\(t = 1\) \textbf{to} \(w\)}
        \State Coordinator (Player 1) pulls \(\Carm\)
    \EndFor
    \State \textbf{Output:} Signal detected if consecutive zero rewards are observed
    \end{algorithmic}
  \end{algorithm}

The coordinator uses the following algorithm to encode \( V_{\text{top}} \) to the listeners. For each arm in \( [K] \), the coordinator sends a 1-bit message indicating whether the arm belongs to \( V_{\text{top}} \) to all \( \left\lceil \frac{M}{C_{\Carm}} \right\rceil \) groups. 

To send a 1-bit to one group, the coordinator pulls the communication arm for \( \omega(12, t_{\text{comm}}) \) rounds. To send a 0-bit to one group, the coordinator performs \( \text{RR}(K \setminus \Carm, \omega(12, t_{\text{comm}})) \). 

After a bit is sent to one group, the process repeats for each remaining group.

  \begin{algorithm}[H]
    \caption{EncodeArms}
    \label{alg:encode_top_cluster}
    \begin{algorithmic}[1]
    \State \textbf{Input:} A subset of arms \( G'\), Communication arm \(\Carm\),duration \( \omega(12,t_{\text{comm}}) \)
    \State \(w\gets \omega(12,t_{\text{comm}})\times \lceil \frac{M}{C_{\Carm}} \rceil\)
    \For{\( i = 1 \) \textbf{to} \( K \)}
        \If{\( i \in G'\)}
            \State CollisionTestingCoordinator(\(\Carm, w \))
        \Else
            \State Play  \(\text{RR}(K\setminus \Carm,\omega(12,t_{\text{comm}}))\) for \(\lceil \frac{M}{C_{\Carm}} \rceil\) times
        \EndIf
        
    \EndFor
    \end{algorithmic}
  \end{algorithm}

  \item \textbf{Listener Decoding}:
  
The following algorithm allows each listener (i.e., any player \( p \neq 1 \)) to decode a bit message sent by the coordinator using intentional collisions on the communication arm \( \Carm \). The encoding scheme relies on the observation of consecutive zero rewards caused by capacity violations. The coordinator encodes a 1-bit by pulling the communication arm exclusively for a fixed number of rounds, which results in all other players observing zero rewards due to exceeding the arm’s capacity. Conversely, a 0-bit is encoded by not pulling the communication arm exclusively—thus no unusual sequence of zero rewards is observed.

In this procedure, the listener pulls the communication arm for a predefined number of rounds, denoted by \( \omega(12, t_{\text{comm}}) \), and counts how many of those rounds return zero rewards. If all rewards during this interval are zero (i.e., the number of zero rewards equals \(\omega(12, t_{\text{comm}}) \)), the listener decodes the received bit as 1. Otherwise, the bit is interpreted as 0.

  \begin{algorithm}[H]
    \caption{CollisionTestingListener}
    \label{alg:collision_testing_non_coordinator}
    \begin{algorithmic}[1]
    \State \textbf{Input:} Communication arm \(\Carm\), duration \(\omega(12,t_{\text{comm}})\)
    \State \textbf{Initialize:} Zero reward count \(\text{zero\_count} \gets 0\)
    \For{\(t = 1\) \textbf{to} \(\omega(12,t_{\text{comm}})\)}
        \State Player \( p \neq 1 \) pulls \(\Carm\)
        \State \(\text{zero\_count} \gets \text{zero\_count} + 1\) \textbf{if} zero reward is observed
    \EndFor
    \State \Return 1 \textbf{if} \(\text{zero\_count} = w\), 0 \textbf{otherwise}
    \end{algorithmic}
  \end{algorithm}

In the following algorithm, each listener decodes the message sent by the coordinator that indicates which arms belong to \( V_{\text{top}} \), by repeating Algorithm~\ref{alg:collision_testing_non_coordinator} \( K \) times, once for each arm index.

  \begin{algorithm}[H]
    \caption{DecodeArms}
    \label{alg:decode_top_cluster}
    \begin{algorithmic}[1]
    \State \textbf{Input:}  Communication arm \(\Carm\),duration \( \omega(12,t_{\text{comm}}) \)
    \State \textbf{Initialize:} \( \textbf{Decoded\_message}\gets \emptyset \)
    \For{\( i = 1 \) \textbf{to} \( K \)}
        \If{CollisionTestingNon-Coordinator(\(\Carm, \omega(12,t_{\text{comm}}) \))}
            \State \( \textbf{Decoded\_message}\gets \textbf{Decoded\_message} \cup \{i\} \)
        \EndIf
    \EndFor
    \State \Return \( \textbf{Decoded\_message} \)
    \end{algorithmic}
  \end{algorithm}

  This algorithm is executed by listener players and serves two main purposes. First, it detects whether a signal has been sent by the coordinator, using collision-based signaling mechanism (Algorithm~\ref{alg:collision_testing_non_coordinator}). Second, if a signal is successfully detected, the algorithm proceeds to decode a bit-encoded message that identifies the set \( V_{\text{top}} \) (Algorithm~\ref{alg:decode_top_cluster}). 

  \begin{algorithm}[H]
    \caption{Listen}
    \label{alg:listen}
    \begin{algorithmic}[1]
    \State \textbf{Input:} Number of groups \(\mathcal{N}\), Communication arm \(\Carm\), duration \(\omega(12,t_{\text{comm}})\)
    \State \textbf{Initialize:} Message \(\gets \emptyset\), signal \(\gets\) \textbf{false}
    \For{\( i = 1 \) \textbf{to} \(\mathcal{N}\)}
        \If{Group index = \(i\)}
            \State \textbf{signal} \(\gets\) CollisionTestingNon-Coordinator(\(\Carm, \omega(12,t_{\text{comm}})\))
            
        \Else
            \State Play \(\text{RR}(K\setminus \Carm,\omega(12,t_{\text{comm}}))\)
        \EndIf
    \EndFor
    \If{signal}
        \For{\(i = 1\) \textbf{to} \(\mathcal{N}\)}
            \State Message \(\gets\) Decode(\(\Carm, \omega(12,t_{\text{comm}})  \))
        \EndFor
    \EndIf
    \State \Return signal, Message
    \end{algorithmic}
  \end{algorithm}

\end{enumerate}

\subsection{Relaxing the Assumption $K \geq M$}
\label{sec: M K assumption}

We assume $K \geq M$ in the main text to simplify both the algorithm design and analysis. This restriction allows us to focus on the setting where the implementation of Simple Round Robin and Grouped Round Robin strategies is most intuitive and straightforward.

When $K < M$, the overall structure of the algorithm can still be preserved, but modifications to the round-robin protocols are required. In particular, both the Simple and Grouped Round Robin procedures must be revised to ensure that the sample counts remain balanced across players and arms, i.e.,
\[
N_t^i(\nu, \psi) \;\approx\; N_t^j(\nu', \psi), \quad 
\forall \, i,j \in [M], \; \nu,\nu' \in [K], \; t \in \mathbb{N}.
\]

Such modifications are feasible. For example, one naive adjustment to the simple round-robin protocol is to group players with indices greater than $K-1$ together and assign them a temporary index $K$. The standard round-robin can then proceed with $M$ pulls, treating the merged group as a single player. Alternatively, instead of grouping the higher-indexed players, we could group players with indices $1$ to $K-1$ together with players having indices greater than $2K-2$ next. In this case, the exploration phase would rotate among players indexed from $K$ to $2K-2$, ensuring that each of the $M$ players participates in the exploration phase exactly once per full cycle.
The same principles apply when extending these adjustments to the $\psi$-Grouped Round Robin protocol. By carefully constructing overlapping groups, it is possible to maintain balanced sample counts across all arms and players even when $K < M$. Even this naive adjustment only introduces an additional factor of \(\frac{M}{K}\) into the final regret bound, while the dependence on \(\frac{1}{\Delta}\)  is preserved.

\section{Experiment}
\label{sec:experiment}

\subsection{Experiment in Figure~\ref{fig:regret_vs_horizon}}
The experiment in Figure~\ref{fig:regret_vs_horizon} is conducted under the following Bernoulli arm distribution:: $[0.9, 0.8, 0.2, 0.3, 0.2, 0.1]$.  
In Figure~\ref{fig:regret_vs_horizon} we compare \textsc{A-CAPELLA} with \textsc{Selfish} UCB in a setting with $K=5$ arms, $M=4$ players, uniform arm capacities of $C=2$, and reward gap $\Delta=0.5$.  
For \textsc{Selfish} UCB, each player runs UCB independently with exploration radius $5$, i.e.,
\[
I_a(t) = \hat{\mu}_a(t) + 5\sqrt{\tfrac{2 \log t}{N_a(t)}},
\]
with uniform tie-breaking and one initial pull per arm.  
Each horizon configuration is averaged over $100$ independent runs. 

\[
b_{m,k}(t) = \max_{q \in [0, 1]} 
\left\{ 
q : N_{m,k}(t) \, d\!\left(\hat{\mu}_{m,k}(t), q\right) \le f(t)
\right\},
\]
where
\[
f(t) = \log t + c \log \log t, \quad c \ge 0,
\]
and
\[
d(p, q) = 
p \log \frac{p}{q} 
+ (1 - p) \log \frac{1 - p}{1 - q}.
\]
At every \(t\), we choose the arm with highest KL-index. In our experiments, we set \( c = 3 \). 
Building upon \textsc{Selfish} KL-UCB, the Randomized \textsc{Selfish} UCB introduces Gaussian noise into the KL-index of each arm:
\[
b_{m,k}(t) + \frac{Z_{m,k}(t)}{t},
\]
where \( Z_{m,k}(t) \sim \mathcal{N}(0,1) \). 
Similar to \textsc{Selfish} UCB, neither of these methods provides a theoretical upper bound on regret. 

All experiments were conducted over 100 random seeds, with each player's seed defined as the global seed plus their player index. Consistent with our main results, the two additional baselines exhibit substantially higher variance compared to \textsc{A-CAPELLA}, and their average regret remains larger, especially as the horizon \( H \) increases.

\begin{figure}[ht]
  \centering
  \includegraphics[width=1\linewidth]{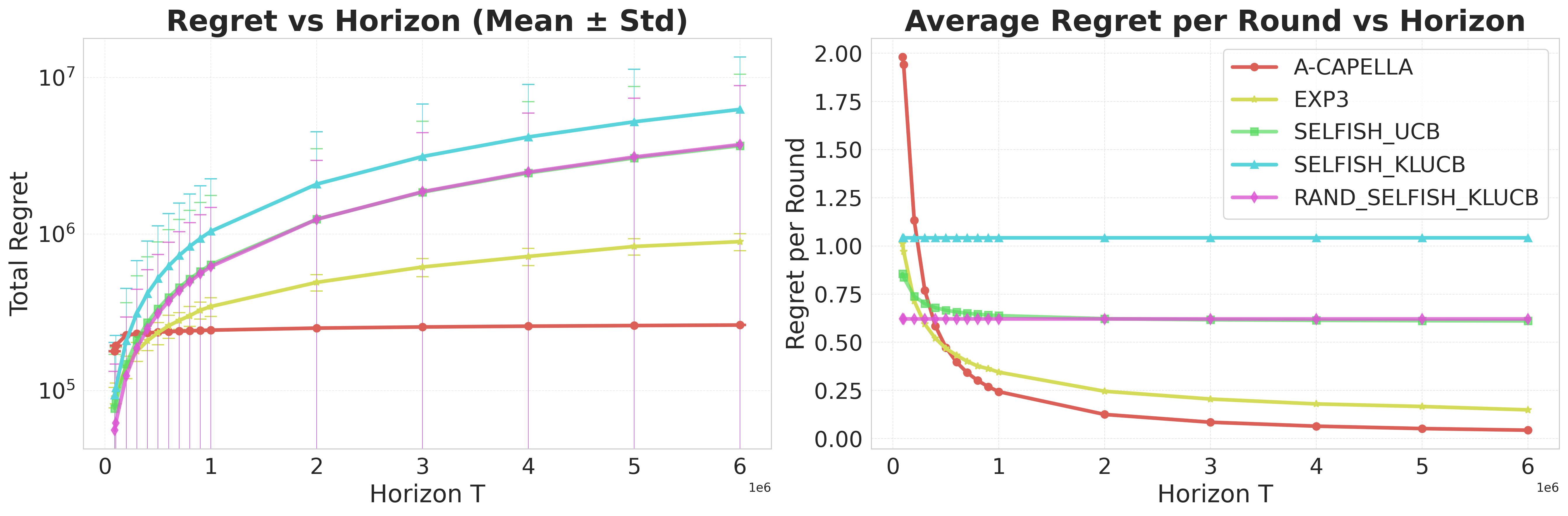}
  \vspace{.1in}
  \caption{
    Regret comparison across algorithms. Error bars represent the standard deviation over 100 random seeds.
  }
  \label{fig:4algos_compare}
\end{figure}

\begin{figure}[ht]
  \centering
  \includegraphics[width=1\linewidth]{regret_boxplots_big_log_exp3added.png}
  \vspace{.1in}
  \caption{
    Box Plot Algorithm Comparison
  }
  \label{fig:box_algo_compare}
\end{figure}

Figure~\ref{fig:4algos_compare} compared 4 algorithms Cumulative Regret vs Horizon and Average Regret vs Horizon. Figure~\ref{fig:box_algo_compare} shows the box plot comparing the same experiment. 
The black line indicates the median for all algorithms, the box spans the 25\% and 75\% quartiles, 
and the scattered points represent outliers.

\subsection{Additional Experiments}
We extend our evaluation with additional experiments to study the effect of problem parameters on regret.  
Unless otherwise specified, we set the horizon to $T = 10^{7}$, assign a uniform capacity of $C=2$ to each arm, and let the expected rewards decrease linearly from $0.9$ to $0.1$ across arms, with a controlled reward gap at the boundary between the $V$-th and $(V{+}1)$-th best arms, i.e., 
\[
\mu_{\sigma(V)} - \mu_{\sigma(V+1)} = \Delta.
\]

\begin{itemize}
    \item Rows 1--3: vary the number of arms $K$ to examine scalability with respect to the action space.  
    \item Rows 4--6: vary the number of players $M$ to test robustness under different levels of competition.  
    \item Rows 7--9: vary the gap parameter $\Delta$ to study its impact on instance-dependent regret.  
\end{itemize}

These experiments collectively demonstrate how regret scales with $K$, $M$, and $\Delta$, complementing the results in Figure~\ref{fig:regret_vs_horizon}.

\begin{table}[ht]
\centering
\caption{Average per-round regret under varying $K$, $M$, and gap $\Delta$. Rows 1--3 vary $K$; rows 4--6 vary $M$; rows 7--9 vary $\Delta$.}
\label{tab:exp_summary}
\begin{tabular}{c c c c}
\toprule
$K$ (Arms) & $M$ (Players) & $\Delta$ & Ave. Regret \\
\midrule
5  & 5  & 0.5 & 0.042 \\
10 & 5  & 0.5 & 0.104 \\
20 & 5  & 0.5 & 0.231 \\
\midrule
20 & 2  & 0.5 & 0.056 \\
20 & 10 & 0.5 & 0.652 \\
20 & 20 & 0.5 & 2.478 \\
\midrule
15 & 5  & 0.5 & 0.167 \\
15 & 5  & 0.3 & 0.790 \\
15 & 5  & 0.1 & 2.000 \\
\bottomrule
\end{tabular}
\end{table}

\subsection{Libraries and computational resources}
\label{sec:Libraries and computational resources}
\paragraph{Libraries used in the experiments.} We set up our experiments using Python 3.13.5 \citep{van1995python}, and made use of the following libraries: NumPy 2.2.6 \citep{harris2020array}, NetworkX 3.5 \citep{hagberg2008exploring},  Seaborn\citep{hunter2007matplotlib}, and Matplotlib\citep{hunter2007matplotlib}. All packages were installed through PyPI. 

\paragraph{Computational resources.} All experiments were conducted on the Shared Computing Cluster (SCC). We primarily used CPU nodes with up to 256 GB of memory, scheduled through the cluster’s queueing system. No specialized GPUs were used. Experiment in Figure~\ref{fig:regret_vs_horizon} was conducted over 10 cpus and to obtain the final results
over 100 seeds we estimated that 24/hours/core are sufficient.

  \section{Modification for Wang et al. 2022a Setting}
  \label{sec:modification for wang}
  We now revise \textsc{A-CAPELLA} to handle the setting where each player only observes the \emph{aggregate reward} on the arm they pull, rather than individual reward feedback, and where only players exceeding an arm's capacity receive zero reward. This feedback model is studied by \citet{wang2022a}. While the overall algorithmic structure remains unchanged, both the capacity estimation procedure and the collision testing routine require adjustment under this model. We assume that the reward distributions for all arms are supported on the interval \([0, 1]\).

  \paragraph{Modified capacity estimation.}
  We replace the overload-detection method, previously based on observing consecutive zero rewards, with one based on detecting a \emph{drop in per-player reward}. Let \( r_{a,t}^{(\psi)} \) be the total reward observed on arm \( a \) at time \( t \) when pulled by a group of size \( \psi \). The per-capita reward is \( r_{a,t}^{(\psi)} / \psi \), and we define the empirical average up to time \( t \) as:
  \[
  \hat{\mu}_{a,\psi}(t) = \frac{1}{t} \sum_{\ell=1}^{t} \frac{r_{a,\ell}^{(\psi)}}{\psi}.
  \]
  Let \( \mu_{a,\psi} = \mathbb{E}[\hat{\mu}_{a,\psi}(t)] \), and suppose \( C_a \) is the capacity of arm \( a \). Then:
  \[
  \mu_{a,\psi} = \mu_{a,1} \cdot \frac{\min\{\psi, C_a\}}{\psi}.
  \]
  Overloading occurs when \( \psi > C_a \), leading to a drop in expected per-player reward. In particular,
  \[
  \mu_{a,C_a+1} - \mu_{a,1} = \frac{\mu_{a,1}}{C_a+1} \ge \frac{\mu_{a,1}}{M},
  \]
  since \( C_a < M \) for the communication arm. This guarantees that the overload-induced drop is at least \( \mu_{a,1}/M \), a quantity that can be estimated once we obtain a reliable estimate of \( \mu_{a,1} \).
  
  Using concentration bounds such as Hoeffding or empirical Bernstein, this drop is detectable with probability at least \( 1 - \delta \) after:
  \[
  O\left( \frac{M^2 \log(1/\delta)}{\mu_{a,1}^2} \right)
  \]
  samples. Accordingly, we set the inflation parameter \( h = 2M\), and redefine the signal-testing checkpoints as:
  \[
  \mathcal{T}_{\text{test}} = \left\{ \left( \frac{4(2M+1)}{2M-1} \right)^{\alpha} : \alpha \in \mathbb{N}^+,\, \left( \frac{4(M+1)}{2M-1} \right)^{\alpha} < T \right\}.
  \]
  Once the following condition is satisfied for any arm \( \nu \):
  \begin{equation}
      h \cdot B(N_{t}(\nu,1)) > \hat{\mu}_{\nu,1}(t),
  \end{equation}
  Then, denote that time as \( t_{0}^1\), the capacity of the communication arm \( \nu \) is known to all players.
  
  \paragraph{Modified collision testing.}
  In this reward feedback model, the collision testing duration must also be adjusted to reflect the noisier observations. Each player listens during a collision testing block of duration:
  \[
  \omega'(t_{\text{comm}}) = \left\lceil \frac{\log(\delta / 4K^2M)}{1\cdot B^2(N_{t_{\text{comm}}}(\cdot,1))} \right\rceil.
  \]
  During this period, players monitor the arms in the candidate set. If a player detects a significant unusual drop in per-player reward on any arm during the block when the arm capacity has not been exceeded, that arm is declared the communication arm. Furthemore, the number of collisions required to transmit a bit is thus given by \( \omega'(t_{\text{comm}}) \).

  \paragraph{Regret analysis} We now analyze the regret of \textsc{A-CAPELLA} under this revised setting. 
  \paragraph{Phase~1.}
Under the event \( \mathcal{E}_{\text{good}} \), detecting a capacity drop requires observing a gap of at least \( \mu_{\sigma_1,1}/M \), which can be reliably identified using empirical concentration bounds after:
\[
O\left( \frac{M^2 \log(1/\delta)}{\mu_{\sigma_1,1}^2} \right)
\]
Grouped Round Robin sessions. 
Each session incurs \( O(2M^2K(\mu_{\sigma_1} - \mu_{\sigma_K}))=O(2M^2K^2\max_i \gapi{i})\) regret. Thus, the total regret in Phase~1 is:
\[
O\left( \frac{(M^4K \cdot \log\left( \log T / \delta \right)}{\mu_{\sigma_1}} \right).
\]
\paragraph{Phase~2 (first partitioning).}
Detecting a disconnection in the connectivity graph using Simple Round-Robin requires:
\[
O\left( \frac{\log(1/\delta)}{\max_i \gapi{i}^2} \right)
\]
rounds, and each round contributes at most \( M(K - V)K \cdot \max_i \gapi{i} \) regret by Lemma~\ref{lemma:simple round robin regret}. The resulting regret from this part is:
\[
O\left( \frac{M(K - V)K \cdot \log(1/\delta)}{\max_i \gapi{i}} \right).
\]

To transmit a bit via collision testing, players observe arms over a duration of
\[
\omega'(t_{\text{comm}}) 
= O\left( \frac{M^2 \log(1/\delta)}{\mu_{\sigma_1}^2} \right).
\]
Each round may incur \( O(M \cdot \mu_{\sigma_1}) \) regret. To transmit \( K + 1 \) bits to all \( M \) players, we need approximately \( \lceil M / C_{\Carm} \rceil \) groups per bit. Thus, the total communication regret is:
\[
O\left( \frac{M^3 K \cdot \log(1/\delta)}{C_{\Carm} \cdot \mu_{\sigma_1}} \right).
\]

Combining both components, the total regret in Phase~2 is:
\[
O\left( \log\left( \frac{\log T}{\delta} \right) \cdot \left(
\frac{M(K - V)K}{\max_i \gapi{i}} + \frac{M^3 K}{C_{\Carm} \cdot \mu_{\sigma_1}} \right) \right).
\]

\paragraph{Phase~3 (first partitioning).}
The dominant cost in Phase~3 arises from capacity estimation using Grouped Round-Robin.  
Under the aggregate feedback model, players must detect a per-player reward gap of at least \( \mu_j / M \), where \( j = \arg\max_i \gapi{i} \), in order to distinguish the top cluster from others. To ensure this separation across all relevant arms, the number of Grouped Round-Robin sessions required is upper bounded by:
\(
O\left( \frac{M^2}{\max_i \gapi{i}^2} \right).
\)
Each session incurs regret up to \( (MK - M + 1)M\mu_{\sigma_1} \), resulting in a total regret for Phase~3 bounded by:
\[
O\left( \frac{(MK - M + 1)M^3\mu_{\sigma_1}  \cdot \log\left( \log T / \delta \right)}{\max_i \gapi{i}^2} \right).
\]

Since the recursive partitioning process repeats at most \( K - 1 \) times, and we set \( \delta = \frac{1}{T^2MK^2} \), the total regret incurred across all recursive partitions is bounded by:
\[
O\left( \log(TMK) \cdot \left[
\frac{M(K - V)K^2 }{\Delta}+\frac{ (MK - M + 1)M^3K\mu_{\sigma_1}  }{\Delta^2}
+ \frac{M^3 K^2}{C_{\Carm} \cdot \mu_{\sigma_1}}
\right] \right),
\]
where \( \Delta = \min\{\gapi{V}, \gapi{V-1}\} \). Suppose \(\arm{K}<\Delta\), then \(\arm{1}<K \Delta\). The total regret can be simplied to 
\[
O\left( \log(TMK) \cdot \left[
\frac{M^4K^3 }{\Delta}
\right] \right),
\]
We emphasize that this bound is conservative: it uses worst-case regret estimates from Simple and Grouped Round Robin under the assumption that \emph{all players} pulling an overloaded arm receive zero reward. In the aggregate feedback model, however, only the \emph{overloaded players} receive zero reward, while players within capacity still receive non-zero feedback. As a result, the actual regret in practice is significantly smaller than this upper bound.

\end{document}